\documentclass[10pt,journal]{IEEEtran}

\usepackage{mathtools,amsmath,amsthm}

\usepackage[utf8]{inputenc} 
\usepackage[T1]{fontenc}    
\usepackage{hyperref}       
\usepackage{url}            
\usepackage{booktabs}       
\usepackage{amsfonts}       
\usepackage{nicefrac}       
\usepackage{microtype}      
\usepackage{caption}
\usepackage{subcaption}
\usepackage{xcolor}       
\usepackage{cleveref}
\usepackage[binary-units]{siunitx}

\usepackage{enumitem}
\usepackage{mathtools}
\usepackage{multirow}
\usepackage{amsthm}
\usepackage{algorithm}
\usepackage{algpseudocode}

\theoremstyle{plain}
\newtheorem{Theorem}{Theorem}
\newtheorem{Lemma}{Lemma}
\newtheorem{Definition}{Definition}

\newtheorem{Problem}{Problem}
\newtheorem{Remark}{Remark}

\newcommand{\pinom}{\pi_\mathrm{nom}}
\newcommand{\piqp}{\pi_\mathrm{QP}}

\newcommand{\nrays}{n_{\mathrm{rays}}}

\newcommand{\abs}[1]{\left\lvert#1\right\rvert}
\newcommand{\norm}[1]{\left\lVert#1\right\rVert}

\DeclareMathOperator{\softmax}{softmax}

\newcommand{\revision}[1]{ #1}

\definecolor{CfBlue}{HTML}{7EB0D5}
\definecolor{CfBlue2}{HTML}{4095d6}

\crefname{figure}{Fig.}{Figs.} \Crefname{figure}{Fig.}{Figs.} \AtBeginEnvironment{appendices}{\crefalias{section}{appendix}}
\AtBeginEnvironment{appendices}{\crefalias{subsection}{appendix}}

\allowdisplaybreaks

\IEEEoverridecommandlockouts

\begin{document}

\title{GCBF+: A Neural Graph Control Barrier Function \mbox{Framework for Distributed Safe Multi-Agent Control}}

\author{\mbox{Songyuan Zhang*, Oswin So*, Kunal Garg, and Chuchu Fan}
\thanks{SZ, OS, KG, and CF are with the Department of Aeronautics and Astronautics at MIT, \texttt{\{szhang21,oswinso,kgarg,chuchu\}@mit.edu}.}
\thanks{*Equal contribution}
\thanks{Project website: \href{https://mit-realm.github.io/gcbfplus/}{https://mit-realm.github.io/gcbfplus/}}
}

\maketitle

\begin{abstract}
Distributed, scalable, and safe control of large-scale multi-agent systems is a challenging problem. In this paper, we design a distributed framework for safe multi-agent control in large-scale environments with obstacles, where a large number of agents are required to maintain safety using only local information and reach their goal locations. We introduce a new class of certificates, termed graph control barrier function (GCBF), which are based on the well-established control barrier function theory for safety guarantees and utilize a graph structure for scalable and generalizable distributed control of MAS. We develop a novel theoretical framework to prove the safety of an arbitrary-sized MAS with a single GCBF. We propose a new training framework GCBF+ that uses graph neural networks to parameterize a candidate GCBF and a distributed control policy. The proposed framework is distributed and is capable of taking point clouds from LiDAR, instead of actual state information, for real-world robotic applications. We illustrate the efficacy of the proposed method through various hardware experiments on a swarm of drones with objectives ranging from exchanging positions to docking on a moving target without collision. Additionally, we perform extensive numerical experiments, where the number and density of agents, as well as the number of obstacles, increase. Empirical results show that in complex environments with {agents with nonlinear dynamics} (e.g., Crazyflie drones), GCBF+ outperforms the hand-crafted CBF-based method with the best performance by up to $20\%$ for relatively small-scale MAS { with} up to 256 agents, and leading reinforcement learning (RL) methods by up to $40\%$ for MAS with 1024 agents. Furthermore, the proposed method does not compromise on the performance, in terms of goal reaching, for achieving high safety rates, which is a common trade-off in RL-based methods.
\end{abstract}

\section{Introduction}
\subsection{Background}
Multi-agent systems (MAS) have received tremendous attention from scholars in different disciplines, including computer science and robotics, as a means to solve complex problems by subdividing them into smaller tasks \cite{dorri2018multi}. MAS applications include but are not limited to warehouse operations \cite{baiyu2023DD,kattepur2018distributed}, self-driving cars \cite{schmidt2022introduction,palanisamy2020multi,zhou2021smarts,zhang2023compositional}, coordinated navigation of a swarm of drones in a dense forest for search-and-rescue missions \cite{tian2020search,ghamry2017multiple}; interested reader is referred to  \cite{ju2022review} for an overview of MAS applications.  For such safety-critical MASs, it is important to design controllers that not only guarantee safety in terms of collision and obstacle avoidance but are also scalable to large-scale multi-agent problems. 

\begin{figure}
    \centering
\includegraphics[width=\linewidth]{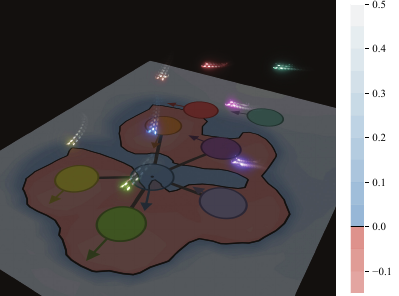}
    \caption{\textbf{8-Crazyflie swapping with GCBF:} We learn a distributed graph control barrier function (GCBF) for an 8-agent swapping task on the Crazyflie hardware platform. We visualize the learned GCBF for the \textcolor{CfBlue2}{\textbf{blue agent}} and draw the edges with its neighboring agents in grey. The learned GCBF can handle arbitrary graph topologies and hence can scale to an arbitrary number of agents {without retraining}.}
    \label{fig:hw_teaser}
\end{figure}

Common MAS motion planning methods include but are not limited to solving mixed integer linear programs (MILP) for computing safe paths for agents \cite{chen2021scalable,afonso2020task} and {sampling-based planning methods such as} rapidly exploring random tree (RRT) \cite{netter2021bounded}. However, such {\textit{centralized-in-execution}}
approaches, where the complete MAS state information is used, are not scalable to large-scale MAS.
The recent work \cite{saravanos2023ADMM}
performing distributed trajectory optimization
proposes a scalable method for MAS control. However, this approach cannot take into account changing neighborhoods and environments, limiting its applications. There has been quite a lot of development in the learning-based methods for MAS control in recent years; see \cite{garg2023learning} for a detailed overview of learning-based methods for safe control of MAS. Multi-agent Reinforcement Learning (MARL)-based approaches, e.g., Multi-agent Proximal Policy Optimization (MAPPO) \cite{yu2022surprising}, have also been adapted to solve multi-agent motion planning problems. However, when it comes to RL, one main challenge for safety, particularly in multi-agent cases, is the tradeoff between the practical performance and the safety requirement because of the conflicting reward-penalty structure \cite{garg2023learning}. We argue that our proposed framework does not suffer from a similar tradeoff and can automatically balance satisfying safety requirements and performance criteria through a carefully designed training loss.  

{In the past few years, control barrier functions (CBFs) have become a popular tool to encode safety requirements for robotic systems \cite{ames2019control}. 
For MAS, safety is generally formulated pair-wise. Therefore, a CBF is assigned for each pair-wise safety constraint, and approximation methods are used to combine multiple constraints \cite{glotfelter2017nonsmooth,jankovic2021collision,cheng2020safe,garg2021robust}. While hand-crafted CBF-based quadratic programs (CBF-QP) have shown promising results for single-agent systems \cite{ames2017control} and \textit{simple} (i.e., linear systems or systems with relative degree one) small-scale multi-agent systems \cite{wang2017safety}, it is difficult to find a CBF when it comes to highly complex nonlinear systems and large-scale MAS.
Another major challenge with such approaches is constructing a CBF in the presence of input constraints, e.g. actuation limits.
There are some recent developments on this topic for MAS, see e.g.,\cite{chen2020guaranteed}; however, as noted by the authors in \cite{agrawal2021safe}, finding a barrier function that satisfies safety conditions in the presence of input bounds is a complex problem.}

\subsection{Our contributions}
To overcome these limitations, in this paper, we first introduce the notion of graph control barrier function, termed GCBF (\Cref{fig:hw_teaser}), for large-scale MAS to address the problems of safety, scalability, and generalizability. The proposed GCBF can account for an arbitrary and changing number of neighbors, and hence, {computed for a small-scale MAS, the same GCBF generalizes to large-scale MAS}. We provide a new safety result using the notion of GCBF that {certifies} the safety of MAS of arbitrary size. This is the first such result that shows the safety of MAS of any size using one barrier function. Next, we introduce GCBF+, a novel { \textit{centralized training distributed execution}} framework to learn a candidate GCBF along with a distributed control policy. We use graph neural networks (GNNs) to capture the changing graphical topology of distance-based observation information flow. 
We propose a novel loss function formulation that accounts for safety, goal-reaching as well as actuation limits, thereby addressing the limitations of hand-crafted CBF-QP methods. Furthermore, the proposed algorithm can work with LiDAR-based point-cloud observations to handle obstacles in real-world environments.  
With these technologies, our proposed framework can generalize well to many challenging settings, including crowded and unseen obstacle environments.

To corroborate the practical applicability of the proposed method, we perform various hardware experiments on a swarm of Crazyflie drones. The hardware experiments consist of the drones safely exchanging positions in a crowded workspace in the presence of {static and moving obstacles} and docking on a moving target while maintaining safety.
We also perform extensive numerical experiments and provide empirical evidence of the improved performance of the proposed GCBF+ framework compared to the prior version of the algorithm (GCBFv0) in \cite{zhang2023distributed}, a state-of-the-art MARL method (InforMARL) \cite{nayak2023scalable}, {MPC method from \cite{sathya2018embedded}}, and hand-crafted CBF-QP methods from \cite{wang2017safety}.
We consider three 2D environments and two 3D environments in our numerical experiments consisting of linear and nonlinear systems. In the obstacle-free case, we train with $8$ agents and test with over $1000$ agents. In the linear cases, the performance improvement (in terms of safety rate) is about $5$\%, while in the nonlinear cases, the performance improvement is more than $30$\%. In the obstacle environment, we consider $8$ obstacles in training, while up to $128$ obstacles are considered in testing. These experiments corroborate that the proposed method outperforms the baseline methods in successfully completing the tasks in various 2D and 3D environments.

\subsection{Differences from conference version}
This paper builds on the conference paper \cite{zhang2023distributed} which presented the GCBFv0 algorithm. 
We propose a new algorithm GCBF+ which improves upon GCBFv0 in the following ways.
\begin{itemize}
    \item \textbf{Algorithmic modifications}: 
In the prior work, the control policy was learned to account only for the safety constraints, and a CBF-based switching mechanism was used for switching between a goal-reaching nominal controller and a safe neural controller. This led to undesirable behavior and deadlocks in certain situations. 
    Furthermore, an online policy refinement mechanism was used in \cite{zhang2023distributed} when the learned controller could not satisfy the safety requirements which required agents to communicate their control actions for the policy update, adding computational overhead. We modify how the training loss is defined so that the safety and goal-reaching requirements do not conflict, making it possible for the training loss to go to zero. In this way, we can use a single controller for both safety and goal-reaching, without an online policy refinement step for higher safety rates. 
\item \textbf{Actuation limits} ~Another major limitation of GCBFv0 is that it does not account for actuator limits which may result in undesirable behavior when implemented on real-world robotic systems.
    In contrast, in the proposed method in this work, the learned controller considers actuator limits through a look-ahead mechanism for approximation of the safe control invariant set. This mechanism ensures that the learned controller satisfies the actuation limits while keeping the system safe.
    \item {\textbf{Theoretical results on generalization} ~While \cite{zhang2023distributed} proves that a GCBF {certifies} safety for a specific size of the MAS, it does not prove that the same GCBF {certifes} safety when the number of agents changes. We advance this theoretical result to 
prove that a GCBF can certify the safety of a MAS of \textit{any} size. This brings the theoretical understanding of the algorithm closer to the empirical results, where we observe our GCBF+ algorithm scaling to over $1000$ agents while being trained with $8$ agents. 
    }
\item \textbf{Hardware experiments} ~We include various hardware experiments on a swarm of Crazyflie drones, thereby demonstrating the practicality of the proposed framework. 
    \item \textbf{Additional numerical experiments} ~We also include various new numerical experiments as compared to the conference version. In particular, we perform experiments with more realistic system dynamics, such as the 6DOF Crazyflie drone, in contrast to simpler dynamics used in the numerical experiments in \cite{zhang2023distributed}. Furthermore, we provide comparisons to new baselines: InforMARL from \cite{nayak2023scalable}, which is a better RL-based method for safe MAS control, {MPC from \cite{sathya2018embedded}}, and centralized and distributed hand-crafted CBF-based methods from \cite{wang2017safety}. 
    \item \textbf{Better performance} ~ We illustrate that the new GCBF+ algorithm proposed in this paper has much better performance than the original GCBFv0 algorithm in the conference version. In particular, in complex environments consisting of {agents with nonlinear dynamics}, GCBFv0 has a success rate of less than $10\%$ while GCBF+ has a success rate of over $55\%$. Furthermore, in crowded 2D obstacle environments, GCBFv0 has a success rate close to $20\%$ while GCBF+ has a success rate of over $95\%$. 
\end{itemize}

\subsection{Related work}
\textbf{Graph-based methods}~ Graph-based planning approaches such as \textit{prioritized} multi-agent path finding (MAPF) \cite{ma2019searching} and conflict-based search for MAPF \cite{sharon2015conflict} can be used for multi-agent path planning for known environments.
However, MAPF does not take into account system dynamics and does not scale to large-scale systems due to computational complexity.
Another line of work for motion planning in obstacle environments is based on the notion of velocity obstacles \cite{arul2021v} defined using collision cones for velocity. Such methods can be used for large-scale systems with safety guarantees under mild assumptions. However, the current frameworks under this notion assume single or double integrator dynamics for agents. The work in \cite{zheng2018magent} scales to large-scale systems, but it only considers a discrete action space and hence does not apply to robotic platforms that use more general continuous input signals. 

{
\textbf{Model predictive control} ~ To tackle MAS, distributed Model predictive control (MPC) methods have been proposed, incorporating multi-agent path planning and machine learning \cite{wang2014synthesis,toumieh2022decentralized,zhu2020trajectory,fedele2023distributed,luis2019trajectory} with distributed optimization \cite{conte2012computational,nedic2018distributed,mestres2023distributed}. Although the computation is distributed, many MPC works require a central node to perform global information exchange. In addition, the computation complexity of MPC methods impedes their scalability. 
}

\textbf{Centralized CBF-based methods} ~ For systems with relatively simple dynamics, such as single integrator, double integrator, and unicycle dynamics, it is possible to use a distance-based CBF \cite{ames2019control}. For systems with polynomial dynamics, it is possible to use the Sum-of-Squares (SoS) \cite{prajna2002introducing} method to compute a CBF \cite{xu2017correctness}. The key idea of SoS is that the CBF conditions consist of a set of inequalities, which can be equivalently expressed as checking whether a polynomial function is SoS. In this manner, a CBF can be computed through convex optimization \cite{xu2017correctness,srinivasan2021extent,zhao2023convex}. However, the SoS-based approaches suffer from the curse of dimensionality (i.e., the computational complexity grows exponentially with the degree of polynomials involved) \cite{ahmadi2016some}. 

\textbf{Distributed CBF-based methods} ~ While centralized CBF is an effective shield for small-scale MAS, due to its poor scalability, it is difficult to use it for large-scale MAS. To address the scalability problem, distributed CBFs have been developed 
{\cite{wang2017safety,zhang2023distributed,cai2021safe,qin2021learning,Fernandez-Ayala2023distributed,wang2024distributed}}. In contrast to centralized CBF where the state of the MAS is used, for a distributed CBF, only the local observations and information available from communication with neighbors are used, reducing the problem dimension significantly. {However, similar to a centralized CBF, it is difficult to hand-craft distributed CBF for agents with nonlinear dynamics and input constraints.  Most of the works on the safety of MAS consider CBF between each pair of agents, but the resulting control set that satisfies all the pair-wise CBF conditions along with input constraints can be empty.}

\textbf{Learning CBFs}~ One way of navigating the challenge of hand-crafting a CBF is to use neural networks (NNs) for learning a CBF \cite{dawson2023safe}. In the past few years, machine learning (ML)-based methods have been used to learn CBFs for complex systems \cite{qin2021learning,dawson2022safe, qin2022sablas,so2023train,meng2021reactive}
. However, it is challenging for them to balance safety and task performance for multi-task problems, and some methods are not scalable to large-scale multi-agent problems.
The Multi-agent Decentralized CBF (MDCBF) framework in \cite{qin2021learning} uses an NN-based CBF designed for MAS. However, they do not encode a method of distinguishing between other controlled agents and \textit{uncontrolled} agents such as static and dynamic obstacles. This can lead to either conservative behaviors if all the neighbors are treated as non-cooperative obstacles, or collisions if the obstacles are treated as cooperative, controlled agents. Furthermore, the method in \cite{qin2021learning} 
does not account for changing graph topology in their approximation, which can lead to an incorrect evaluation of the CBF constraints and consequently, failure. 

\textbf{Multi-agent RL} ~ The review paper \cite{dinneweth2022multi} provides a good overview of the recent developments in multi-agent RL (MARL) with applications in safe control design (see \cite{ zhang2019mamps,qie2019joint,everett2018motion}).  
There is also a lot of work on MARL-based approaches with focuses on motion planning \cite{yu2022surprising,cai2021safe,
xiao2022motion,
dai2023socially,pan2022mate,wang2022darl1n}. However, these approaches do not provide safety guarantees due to the reward structure. One major challenge with MARL is designing a reward function for MAS that balances safety and performance. As argued in \cite{wang2022distributed}, MARL-based methods are still in the initial phase of development when it comes to safe multi-agent motion planning. 

\textbf{GNN-based methods} ~ Utilizing the permutation-invariance property, GNN-based methods have been employed for problems involving MAS \cite{yu2023learning,
blumenkamp2022framework,jia2022multi}. The Control Admissiblity Models (CAM)-based framework in \cite{yu2023learning} uses a GNN framework for safe control design for MAS. However, it involves sampling control actions from a set defined by CAM and there are no guarantees that such a set is non-empty, leading to feasibility-related issues of the approach. 
Works such as \cite{tolstaya2021multi,li2020graph} use GNNs for generalization to unseen environments and are shown to work on teams of up to a hundred agents. However, in the absence of an attention mechanism, the computational cost grows with the number of agents in the neighborhood and hence, these methods are not scalable to very large-scale problems (e.g., a team of 1000 agents) due to the computational bottleneck.

The rest of this article is organized as follows. We formulate the MAS control problem in Section \ref{sec:problem}. Then, we present GCBF as a safety certificate for MAS in \Cref{sec:gcbf-theory}, and the framework for learning GCBF and a distributed control policy in \Cref{sec:gcbf+}.  Section \ref{sec: exp imp details} presents the implementation details on the proposed method, while Sections \ref{sec:experiments} and \ref{sec:hardware exps} present numerical and hardware experimental results, respectively. Section \ref{sec:conclusions} presents the conclusions and limitations of the paper, and proposes directions for future work. 

\section{Problem formulation}\label{sec:problem}
\textbf{Notations}~ In the rest of the paper, $\mathbb R$ denotes the set of real numbers and $\mathbb R_+$ denotes the set of non-negative real numbers. We use
$\|\cdot\|$ to denote the Euclidean norm. 
A continuous function $\alpha:\mathbb R\rightarrow\mathbb R$ is an extended class-$\mathcal K$ function if it is strictly increasing and $\alpha(0) = 0$. 
We use $[\cdot]_+$ to denote the function $\max(0, \cdot)$. We drop the arguments $t, x$ whenever clear from the context.
Unless otherwise specified, given a set of vectors $\{x_i\}$ with $x_i\in \mathcal X$ for each $i\in 1, 2, \dots N$ and an index set $\mathcal I$, 
we define $\bar{x}_{\mathcal I} \in \mathcal{X}^{\abs{\mathcal I}}$ as the concatenated vector of the vectors $x_i$ with index $i\in \mathcal I$ from the index set.

{
We consider designing a distributed control framework to drive $N$ agents, each denoted with an index from the set $V_a \coloneqq \{1, 2, \dots, N\}$, to their goal locations in an environment with obstacles while avoiding collisions. 
The motion of each agent is governed by general nonlinear control affine dynamics
\begin{equation} \label{eq:agent_dyn}
    \dot x_i = f_i(x_i) + g_i(x_i)u_i,
\end{equation}
where $x_i\in \mathcal X_i\subset \mathbb R^n$ and $u_i\in \mathcal U_i\subset \mathbb R^m$ are the state, control input for the $i$-th agent, respectively and $f_i:\mathbb{R}^{n}\rightarrow\mathbb{R}^n, g_i:\mathbb R^{n}\rightarrow\mathbb R^{n\times m}$ are assumed to be locally Lipschitz continuous.
{ 
For simplicity, we restrict our discussion to the case when all agents have the same underlying dynamics}, i.e., where $\mathcal X_i = \mathcal X$, $\mathcal U_i = \mathcal U$ and $f_i=f$, $g_i = g$ for all $i\in V_a$. 
Note that it is also possible to apply our approach to heterogeneous MAS.
For convenience, we also define the motion of the entire MAS via the concatenated state vector $\bar{x} \coloneqq [x_1; x_2; \dots; x_N] \in {\mathcal X}^N$ and $\bar{u} \coloneqq [u_1; u_2; \dots; u_N] \in \mathcal{U}^N$,
such that \eqref{eq:agent_dyn} can equivalently be expressed as
\begin{equation} \label{eq:set_dyn}
    \dot{\bar{x}} = \bar{f}(\bar{x}) + \bar{g}(\bar{x}) \bar{u},
\end{equation}
with $\bar{f}$ and $\bar{g}$ defined accordingly.
}

Let $\mathbb P \subset \mathbb R^{\mathfrak n}$ denote the set of positions in an $\mathfrak n$-dimensional environment (i.e., $\mathfrak n=2$ or $\mathfrak n = 3$). We assume that each state $x \in \mathcal{X}$ is associated with a position $p \in \mathbb{P}$, and denote by $p_i\in \mathbb P$ the first $\mathfrak{n}$ elements of $x_i$ corresponding to the positions of each agent $i$.
For each agent $i \in V_a$, we consider a goal position $p^g_i \in \mathbb{P}$, and define $\bar{p}^g$ as the concatenated goal vector.
The observation data consists of $\nrays > 0$ evenly-spaced LiDAR rays originating from each robot and measures the relative location of obstacles within a sensing radius $R>0$.
{ We assume that $R$ is large enough such that there exists a feasible control input that can keep the agents safe once an obstacle is observed.}
For mathematical convenience, we denote the $j$th ray from agent $i$ by $y^{(i)}_j\in \mathcal{X}$, where the first $\mathfrak{n}$ elements of $y^{(i)}_j$ constitute the position of the hitting target $p_j^{(i)}\in\mathbb P$ and the last $n-\mathfrak{n}$ elements are zero padding.
We then denote the aggregated rays as $\bar{y}_i \coloneqq [y_1^{(i)}; \dots; y_{\nrays}^{(i)}] \in \mathcal{X}^{\nrays}$. The inter-agent collision avoidance requirement imposes that each pair of agents maintain a safety distance of $2 r$ while the obstacle avoidance requirement dictates that $|y^{(i)}_j|>r$ for all $j= 1, 2, \dots, \nrays$, where $r > 0$ is the radius of a circle that can contain the entire physical body of each agent. 
The control objective for each agent $i$ is to navigate the obstacle-filled environment to reach its goal $p^g_i$, as described below.  

\begin{Problem}\label{problem: MAS safety}
Design a distributed control policy $\pi_i$ such that, for a set of $N$ agents $\bar{x}$ and non-colliding goal locations $\bar{p}^g$, the following holds for the closed-loop trajectories for each agent $i \in V_a$:
\begin{itemize}
\item \textbf{Safety (Obstacles)}: $\|y^{(i)}_j(t)\| > r,\;  \forall j = 1, \dots, \nrays{}, t\geq 0$, i.e., the agents do not collide with the obstacles.
    \item \textbf{Safety (Other Agents)}: $\|p_i(t) - p_j(t)\| > 2r$ for all $t\geq 0$, $j \neq i$, i.e., the agents do not collide with each other.
\item \textbf{Liveness}: {$\inf\limits_{t \geq 0}\|p_i(t)-p^g_i\| = 0$, i.e., each agent eventually reaches its goal location $p^g_i$}. 
\end{itemize}
\end{Problem}

To solve \Cref{problem: MAS safety}, we consider the existence of a nominal controller that satisfies the liveness property but not necessarily the safety property, and construct a GCBF-based distributed control policy to additionally satisfy the safety property.

\section{GCBF: a safety certificate for MAS}\label{sec:gcbf-theory}

Based on the algorithm in \cite{zhang2023distributed} (GCBFv0), we propose an improved algorithm, termed GCBF+, to train a \emph{graph} CBF (GCBF) that encodes the collision-avoidance constraints based on the graph structure of MAS. We use GNNs to learn a candidate GCBF jointly with the collision-avoidance control policy. Our GNN architecture is capable of handling a variable number of neighbors and hence results in a distributed and scalable solution to the safe MAS control problem.

\subsection{Safety for arbitrary sized MAS via graphs} \label{subsec:MAS_graph}

We first review the notion of CBF commonly used in literature for safety requirements \cite{ames2019control}. 
Consider a system $\dot x = F(x, u)$ where $x\in \mathcal X \subset \mathbb R^n$, $u\in \mathcal U\subset \mathbb R^m$ and $F:\mathbb R^n\times \mathbb R^m\to \mathbb R^n$. Let $\mathcal C\subset\mathcal X$ be the $0$-superlevel set of a continuously differentiable function $h: \mathcal X\rightarrow\mathbb R$, i.e., $\mathcal C= \{x\in \mathcal X: h(x) \geq 0\}$. Then, $h$ is a CBF if there exists an extended class-$\mathcal{K}$ function $\alpha: \mathbb R\rightarrow\mathbb R$ such that:
\begin{equation}\label{eq: cbf-descent-cond}
    \sup_{u\in\mathcal U} \left[\frac{\partial h}{\partial x} F(x,u) + \alpha\left(h(x)\right)\right]\geq 0, \quad \forall x\in\mathcal X. 
\end{equation}
Let $\mathcal S\subset \mathcal X$ denote a safe set with the objective that the system trajectories do not leave this set. If $\mathcal C\subset \mathcal S$, then the existence of a CBF implies
the existence of a control input $u$ that keeps the system safe \cite{ames2017control}. 

Based on the notion of CBF, we define the new notion of a GCBF to encode safety for MAS of any size. To do so, we first define the graph structure we will use in this work.

A directed graph is an ordered pair $G = (V, E)$, where $V$ is the set of nodes, and $E \subset \{ (i, j) \mid i \in V_a,\, j \in V \}$ is the set of edges representing the flow of information from a \textit{sender} node (henceforth called a neighbor) $j$ to a \textit{receiver} agent $i$. 
Let $\tilde{\mathcal{N}}_i$ denote the set of neighbors for agent $i\in V_a$.
For the considered MAS, we define the set of nodes $V = V_a \cup V_o$ to consist of the agents $V_a$ and the hitting points of all the LiDAR rays from all agents denoted as $V_o$.
The edges are defined between each observed point and the observing agent when the distance between them is within a sensing radius $R > 2r > 0$. 

Given $\mathfrak{n}$, sensing radius $R$, safety radius $r$ and $\nrays$, define $M - 1 \in \mathbb{N}$ as the maximum number of sender neighbors that each receiver agent node can have while all the agents in the neighborhood remain safe.
For simplicity, define $\mathcal{N}_i\subseteq\tilde{\mathcal N}_i$ as the set of $M$ closest neighboring nodes to agent $i$ which also includes agent $i$.\footnote{For breaking ties, the agent with the smaller index is chosen.} Next, define $\bar{x}_{\mathcal{N}_i} \in \mathcal{X}^{M}$ as the concatenated vector of $x_i$ and the neighbor node states with fixed size $M$ that is padded with a constant vector if $\big\lvert \mathcal{N}_i \big\rvert < M$.

\begin{Remark} \label{remark:R_small}
We define $M$ as above so that, for any $i\in V_a$,
changes in the neighboring indices $\mathcal{N}_i$ can only occur without collision at a distance $R$ (see Appendix \ref{app: R_small}).

\end{Remark}

\subsection{Graph Control Barrier Functions}

We define the safe set $\mathcal S_N \subset \mathcal{X}^N$ of an $N$-agent MAS as the set of MAS states $\bar{x}$ that satisfy the safety properties in Problem \ref{problem: MAS safety}, i.e.,
\begin{equation} \label{eq:SN_def}
\begin{aligned}
    \mathcal S_N \coloneqq \Big\{ \bar{x} \in \mathcal{X}^N \;\Big|
    &\; \Big( \norm{y_j^{(i)}} > r,\; \forall i \in V_a, \forall j\in n_\mathrm{rays} \Big) \bigwedge \\
    &\Big( \min_{i, j \in V_a, i \neq j} \norm{p_i - p_j} > 2r \Big) \Big\}.
\end{aligned}
\end{equation}
Then, the unsafe, or avoid, set of the MAS $\mathcal{A}_{N} = \mathcal{X}^N \setminus \mathcal{S}_N$ is defined as the complement of $\mathcal S_N$. 

{
We now introduce the notion of GCBF for encoding safety for MAS. 
We impose that for a given agent $i\in V_a$, a node $j$ where $\norm{ p_i - p_j } \geq R$ does not affect the GCBF $h$ so that the resulting $h$ is smooth. Specifically, for any neighborhood set $\mathcal{N}_i$, let $\mathcal{N}^{<R}_i$ denote the set of neighbors in $\mathcal{N}_i$ that are strictly inside the sensing radius $R$ as
\begin{equation}
    \mathcal{N}^{<R}_i \coloneqq \{ j : \norm{p_i - p_j} < R,\; j \in \mathcal{N}_i \}.
\end{equation}
Now, we are ready to define GCBF formally.
}

{\begin{Definition}[\textbf{GCBF}]\label{def: gcbf}
A continuously differentiable function $h : \mathcal{X}^M \to \mathbb{R}$ is termed as a Graph CBF (GCBF) if there exists an extended class-$\mathcal K$ function $\alpha$ and a control policy $\pi_i: \mathcal{X}^M \to \mathcal{U}$ for each agent $i \in V_a$ of the MAS such that, for all $\bar{x} \in \mathcal{X}^N$ with $N \geq M$,
\begin{equation}\label{eq:graph CBF}
        \dot h(\bar{x}_{\mathcal{N}_i}) +\alpha( h( \bar{x}_{\mathcal{N}_i} ) )\geq 0 \quad \forall i \in V_a,
\end{equation}
where
\begin{equation} \label{eq:hdot_def}
    \dot h(\bar x_{\mathcal N_i}) = \sum_{j\in 
\mathcal{N}_i}\frac{\partial h(\bar x_{\mathcal N_i})}{\partial x_j}\left(f(x_j) + g(x_j) u_j \right),
\end{equation}
with $u_j = \pi_j(\bar x_{\mathcal N_j})$, and the following two conditions hold:
\begin{enumerate}
    \item The gradient of $h$ with respect to nodes $R$ away is $0$, i.e.,
    \begin{equation} \label{eq:ass_deriv_zero}
        \frac{\partial h}{\partial x_j}( \bar{x}_{\mathcal{N}_i} ) = 0 \quad \forall j \in \mathcal{N}_i \setminus \mathcal{N}^{<R}_i.
    \end{equation}
\item The value of $h$ does not change when restricting to neighbors that are in $\mathcal{N}^{<R}_i$, i.e.,
\begin{equation} \label{eq:ass_same_h}
        h( \bar{x}_{\mathcal{N}_i} ) = h( \bar{x}_{\mathcal{N}^{<R}_i} ).
    \end{equation}
\end{enumerate}
\end{Definition}
}

{

\begin{Lemma} \label{lemma:cts}
Given a GCBF $h$, the function $t \mapsto h( \bar{x}_{\mathcal{N}_i(t)}(t))$ is a continuously differentiable function despite $\bar{x}_{\mathcal{N}_i(t)}(t)$ having discontinuities whenever the set of neighboring indices $\mathcal{N}_i(t)$ for agent $i$ changes.
\end{Lemma}
The proof of \Cref{lemma:cts} is provided in Appendix \ref{app: h_cts_proof}.
}
\begin{Remark}\label{rmk:attn}
    One way of satisfying the conditions 1) and 2) in \Cref{def: gcbf} is by taking $h$ to be of the form
    \begin{equation} \label{eq:attn_thing}
        h(\bar{x}_{\mathcal{N}_i}) = \xi_1\bigg( \sum_{j \in \mathcal{N}_i} w(x_i, x_j)\, \xi_2(x_i, x_j) \bigg),
    \end{equation}
    where $\xi_1: \mathbb R^{\rho} \to \mathbb R$ and $\xi_2: \mathcal X \times \mathcal X \to \mathbb R^\rho$ are two encoding functions with $\rho$ the dimension of the feature space, and $w : \mathcal{X} \times \mathcal{X} \to \mathbb{R}$ is a continuously differentiable function such that $w(x_i, x_j) = 0$ and $\frac{\partial w}{\partial x_j}(x_i, x_j) = 0$ whenever $\norm{p_i - p_j} \geq R$.
    In practice, we use graph attention \cite{li2019graph}, which takes the form \eqref{eq:attn_thing}, to realize conditions 1) and 2) in \Cref{def: gcbf}, which we introduce later in \Cref{sec:learn-gcbf}. 
\end{Remark}

With \Cref{def: gcbf}, a GCBF certifies the forward invariance of its $0$-superlevel set under a suitable choice of control inputs. For a GCBF $h$, let ${\mathcal{B}_h} \subset \mathcal{X}^M$ denote the $0$-superlevel set of $h$
\begin{equation}
    {\mathcal{B}_h} \coloneqq \{ \tilde{x} \in \mathcal{X}^M \mid h( \tilde{x} ) \geq 0 \},
\end{equation}
and define $\mathcal{C}_N \subset \mathcal{X}^N$ as the set of $N$-agent MAS states where $\bar{x}_{\mathcal{N}_i}$ lie inside ${\mathcal{B}_h}$ for all $i \in V_a$, i.e.,
\begin{align}\label{eq:C_N}
    \mathcal{C}_N &\coloneqq \bigcap_{i=1}^N \mathcal{C}_{N, i},
\end{align}
where
\begin{align}\label{eq:C_Ni} 
\mathcal{C}_{N,i} &\coloneqq \{ \bar{x} \in \mathcal{X}^N \mid \bar{x}_{\mathcal{N}_i} \in {\mathcal{B}_h} \}. 
\end{align}

\begin{figure}
    \centering
    \includegraphics[width=0.75\columnwidth]{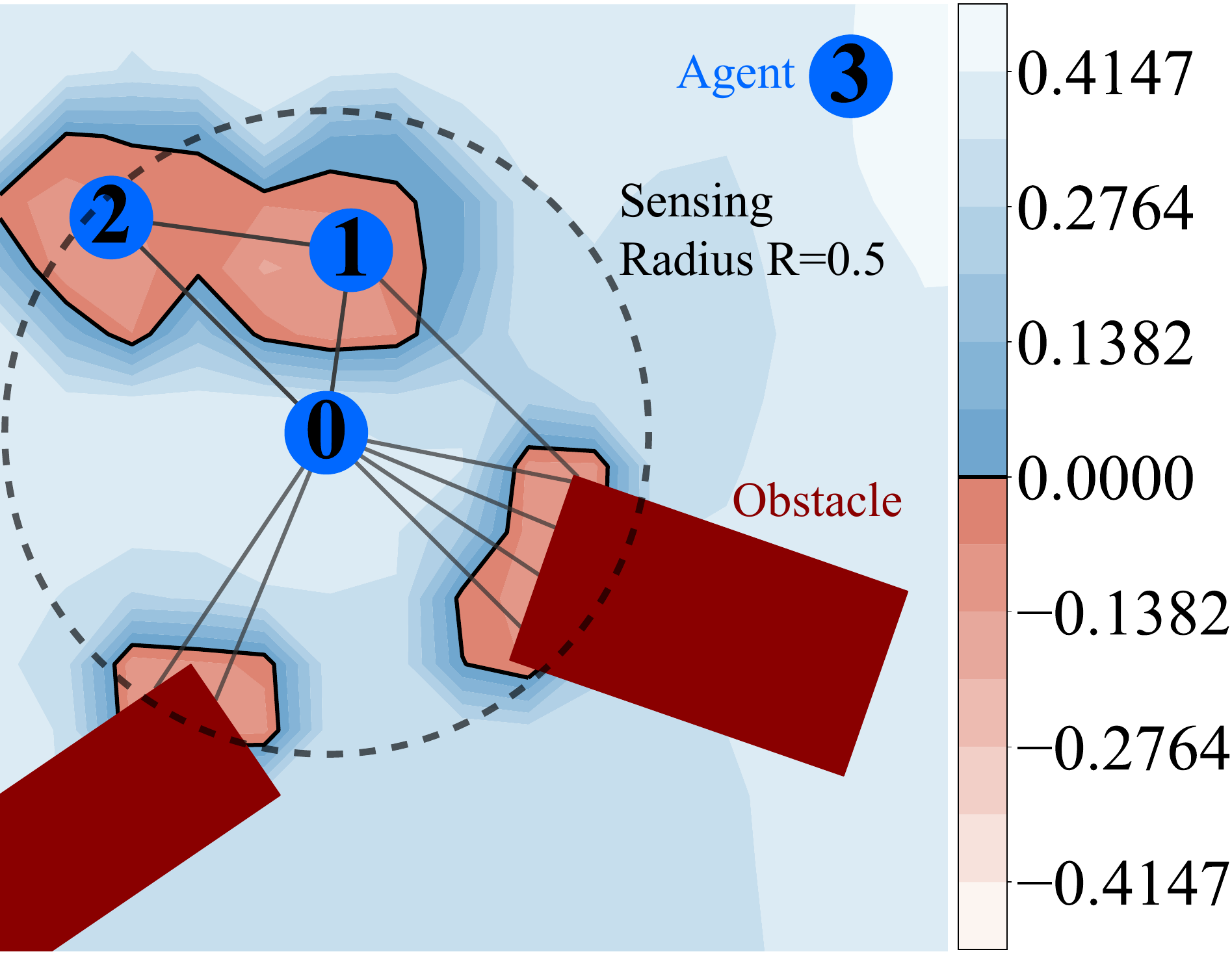}
    \caption{\textbf{GCBF contours: }
For a fixed neighborhood $\mathcal{N}_0$ of agent $0$, we plot the contours of the learned GCBF $h_\theta$, {projected on $xy$-plane}, by varying the position $p_0$ of agent $0$.
Since agent $3$ is outside of agent $0$'s sensing radius, i.e., not a neighbor of agent $0$, it does not contribute to the value of $h(\bar{x}_{\mathcal{N}_0})$.
}
    \label{fig:cbf}
\end{figure}

We now state the result of the safety guarantees of GCBF. 
\begin{Theorem} \label{thm: safety result}
    Suppose $h$ is a GCBF following \Cref{def: gcbf} and {$\mathcal{C}_N \subset \mathcal{S}_N$} for each $N \in \mathbb{N}$. Then, for any $N \in \mathbb{N}$, the resulting closed-loop trajectories of the MAS with initial conditions $\bar{x}(0) \in \mathcal{C}_N$ under any locally Lipschitz continuous control input $\bar{u} : \mathcal{X}^N \to \mathcal{U}^N_{\mathrm{safe}}$  satisfy $\bar{x}(t) \in \mathcal{S}_N$ for all $t \geq 0$, where
    \begin{equation}
        \mathcal{U}^N_{\mathrm{safe}} \coloneqq \left\{ \bar{u} \in \mathcal{U}^N\; \Big|\; \dot h( \bar{x}_{\mathcal{N}_i} ) + \alpha(h(\bar{x}_{\mathcal{N}_i})) \geq 0,\, \forall i \in V_a \right\},
    \end{equation}
    with the time derivative of $h$ given as in \eqref{eq:hdot_def}.
\end{Theorem}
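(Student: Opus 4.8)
The plan is to reduce the claim to the classical control-barrier-function forward-invariance argument, applied simultaneously over all agents $i \in V_a$ to the scalar signals $t \mapsto h(\bar{x}_{\mathcal{N}_i}(t))$. Fix $N \in \mathbb{N}$ and an initial condition $\bar{x}(0) \in \mathcal{C}_N$. Since $\bar{u}$, $\bar{f}$, and $\bar{g}$ are locally Lipschitz, the closed-loop vector field $\bar{x} \mapsto \bar{f}(\bar{x}) + \bar{g}(\bar{x})\bar{u}(\bar{x})$ is locally Lipschitz, so the closed-loop system admits a unique maximal solution $\bar{x}(\cdot)$ on an interval $[0,T)$; everything below takes place on $[0,T)$.

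The crux of the proof, and the step I expect to be the main obstacle, is that $t \mapsto \bar{x}_{\mathcal{N}_i}(t)$ is a \emph{discontinuous} curve: the index set $\mathcal{N}_i(t)$ of the $M$ nearest nodes to agent $i$ jumps whenever agents or LiDAR hitting points move relative to agent $i$, yet the barrier inequality only makes sense if $t \mapsto h(\bar{x}_{\mathcal{N}_i}(t))$ is $C^1$. This is exactly the role of \Cref{ass:zero_weight} together with \Cref{remark:R_small}: as long as $\bar{x}(t) \in \mathcal{S}_N$, the definition of $M$ forces every node strictly within the sensing radius $R$ of agent $i$ to already lie in $\mathcal{N}_i$, so (i) any change of membership of $\mathcal{N}_i$ only exchanges nodes at distance $\geq R$ and leaves $\mathcal{N}^{<R}_i$ unchanged, and (ii) a node can cross the $R$-boundary only at distance exactly $R$. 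Combined with \eqref{eq:ass_same_h}, (i)--(ii) give continuity of $t \mapsto h(\bar{x}_{\mathcal{N}_i}(t))$; combined with \eqref{eq:ass_deriv_zero}, the entering and leaving nodes contribute nothing to the chain rule, so on each subinterval where $\mathcal{N}_i(t)$ is constant the derivative equals $\dot h(\bar{x}_{\mathcal{N}_i})$ as in \eqref{eq:hdot_def}, with no jump terms at the switching instants. Hence $t \mapsto h(\bar{x}_{\mathcal{N}_i}(t))$ is $C^1$ on $[0,T)$ with derivative $\dot h(\bar{x}_{\mathcal{N}_i})$ — the property anticipated in \Cref{rmk:h_cts_diffble}, whose careful verification (including the case $N < M$, where the empty slots are filled by a fixed, non-moving padding vector) I would carry out in Appendix \ref{app: safety proof}.

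With this in hand the remainder is standard. The hypothesis $\bar{u}(t) \in \mathcal{U}^N_{\mathrm{safe}}$ says precisely that $\frac{d}{dt} h(\bar{x}_{\mathcal{N}_i}(t)) \geq -\alpha\bigl(h(\bar{x}_{\mathcal{N}_i}(t))\bigr)$ for all $i \in V_a$ and all $t \in [0,T)$, while $\bar{x}(0) \in \mathcal{C}_N \subseteq \mathcal{C}_{N,i}$ gives $h(\bar{x}_{\mathcal{N}_i}(0)) \geq 0$. Letting $\beta$ solve $\dot\beta = -\alpha(\beta)$ with $\beta(0) = h(\bar{x}_{\mathcal{N}_i}(0))$, the fact that $\alpha$ is extended class-$\mathcal{K}_\infty$ with $\alpha(0) = 0$ makes $[0,\infty)$ forward invariant for this scalar ODE, so $\beta \geq 0$, and the comparison lemma yields $h(\bar{x}_{\mathcal{N}_i}(t)) \geq \beta(t) \geq 0$ on $[0,T)$ for every $i$, i.e. $\bar{x}(t) \in \bigcap_i \mathcal{C}_{N,i} = \mathcal{C}_N \subseteq \mathcal{S}_N$. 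To keep this non-circular — the switching analysis of the previous paragraph presupposes $\bar{x}(t) \in \mathcal{S}_N$ — I would phrase it as a continuation argument: the estimate is valid up to the first time $\bar{x}$ would exit $\mathcal{S}_N$, but it forces $\bar{x}$ to remain in the closed set $\mathcal{C}_N$ (closed since $\bar{x} \mapsto h(\bar{x}_{\mathcal{N}_i})$ is continuous by \Cref{ass:zero_weight}), and $\mathcal{C}_N \subseteq \mathcal{S}_N$, so no such exit time exists and $\bar{x}(t) \in \mathcal{S}_N$ throughout $[0,T)$. Lastly, to obtain the stated conclusion for all $t \geq 0$, if $T < \infty$ one appeals to the usual maximal-interval argument (trajectories confined to a compact set, or forward completeness of the closed loop) to rule out finite escape time, so that $T = \infty$.
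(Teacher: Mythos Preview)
Your proposal is correct and rests on the same two ingredients as the paper's proof: the comparison lemma for the scalar signals $t\mapsto h(\bar{x}_{\mathcal{N}_i}(t))$, and \Cref{ass:zero_weight} together with \Cref{remark:R_small} to neutralise the discontinuities in $\mathcal{N}_i$. The organisation differs slightly. The paper argues piecewise: it introduces the switching times $t_k$ at which some $\mathcal{N}_i$ changes, applies a finite-horizon comparison lemma (\Cref{lem:finite_safety}) on each interval $[t_k,t_{k+1})$, and then inducts on $k$, using \eqref{eq:ass_same_h} at each switch to carry $h\geq 0$ across. You instead fold the switching analysis into a single claim that $t\mapsto h(\bar{x}_{\mathcal{N}_i}(t))$ is globally $C^1$ and then apply the comparison lemma once on the whole interval. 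The two are equivalent; your version makes the circularity (Remark~\ref{remark:R_small} presumes safety) explicit via a continuation argument, whereas the paper handles it implicitly through the induction, and you also flag the finite-escape-time issue, which the paper does not discuss.
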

As a result of \Cref{thm: safety result}, the set $\mathcal{C}_N$, for any $N \in \mathbb{N}$, is a safe control invariant set \cite{blanchini1999set}. An example of a GCBF is shown in \Cref{fig:cbf}.

Unlike traditional methods of proving forward invariance using CBFs \cite{ames2017control}, the proof of \Cref{thm: safety result} is more involved as it must handle the dynamics discontinuities that occur when the neighborhood $\mathcal{N}_i$ of agent $i\in V_a$ changes. We make use of \Cref{lemma:cts} to handle such discrete jumps. The proof of \Cref{thm: safety result} is provided in Appendix \ref{app: safety proof}.

\begin{Remark}
Note that \Cref{thm: safety result} proves that a GCBF can certify the safety of a MAS of \textit{any} size $N \in \mathbb{N}$. 
This is in contrast to the result in \cite{zhang2023distributed}, which only proves that a GCBF can {certify} safety for a specific $N$. {As a result, the GCBF from \cite{zhang2023distributed}, or other notions of CBFs in the prior work, trained on, say, $8$ agents, cannot theoretically certify safety when used for a MAS of $1000$ agents. However, the proposed framework allows such theoretical certification to carry over from a smaller-sized MAS to a larger-sized MAS.}
This brings the theoretical understanding of the proposed algorithm closer to the empirical results, where we observe the new GCBF+ algorithm can scale to over $1000$ agents despite being trained with only $8$ agents.
\end{Remark}

\begin{Remark} \label{remark:CNi_subset}
Note that the individual $\mathcal{C}_{N,i}$ do not all need to be a subset of $\mathcal{S}_N$ as long the intersection $\mathcal{C}_N \subset \mathcal{S}_N$ in \Cref{thm: safety result}.
    For example, if $\mathcal{S}_N$ can be written as the intersection of sets $\mathcal{S}_{N,i}$, i.e., $\mathcal S_N = \bigcap_{i=1}^N \mathcal{S}_{N,i}$, then it is sufficient that $\mathcal{C}_{N,i} \subseteq \mathcal{S}_{N,i}$ for all $i \in V_a$ to obtain that $\mathcal{C}_{N} \subseteq \mathcal{S}_N$, since
    \begin{equation}
        \mathcal{C}_{N} \coloneqq \bigcap_{i=1}^N \mathcal{C}_{N,i} \subseteq \bigcap_{i=1}^N \mathcal{S}_{N,i} \eqqcolon \mathcal{S}_N.
    \end{equation}
    In practice, we take this approach and define
$\mathcal{S}_{N,i}$ for each agent $i\in V_a$ as
\begin{equation}
    \begin{aligned}
        \mathcal{S}_{N,i} \coloneqq \Big\{ \bar{x} \in \mathcal{X}^N \;\Big|
        &\; \Big( \norm{y_j^{(i)}} > r,\; \forall j\in n_\mathrm{rays} \Big) \bigwedge \\
        &\Big( \min_{j \in V_a, i \neq j} \norm{p_i - p_j} > 2r \Big) \Big\}.
    \end{aligned}
    \end{equation}
\end{Remark}

\subsection{Safe control policy synthesis}
For the multi-objective Problem \ref{problem: MAS safety}, in the prior work GCBFv0 \cite{zhang2023distributed}, we used a hierarchical approach for the goal-reaching and the safety objectives, where a nominal controller was used for the liveness requirement. 
During training, a term is added to the loss function so that the learned controller is as close to the nominal controller as possible. In execution, GCBFv0 uses a switching mechanism to switch between the nominal controller for goal reaching and the learned controller for collision avoidance. However, the added loss term corresponding to the nominal controller competes with the CBF loss terms for safety, often sacrificing either safety or goal-reaching.

In this work, we use a different mechanism for encoding the liveness property. 
Given a nominal controller $u_i^{\mathrm{nom}} = \pinom(x_i, p^g_i)$ for the goal-reaching objective, we design a controller that satisfies the safety constraint using an optimization framework that minimally deviates from a nominal controller that only satisfies the liveness requirements.\footnote{In this work, we use simple controllers, such as linear quadratic regulator (LQR) and PID-based nominal controllers in our experiments.} Given a GCBF $h$ and an extended class-$\mathcal K$ function $\alpha$, a solution to the following centralized optimization problem
\begin{subequations}\label{eq:opt control policy}
\begin{align}
    \min_{\bar{u}} \quad & \sum_{i \in V_a} \|u_i- u_i^{\mathrm{nom}}\|^2,\label{eq: obj-cbf-qp} \\
    \text{s.t.} \quad &  u_i \in \mathcal{U}, \quad \forall i \in V_a, \\
& \!\!\sum_{j \in \mathcal{N}_i} \frac{\partial h}{\partial x_j} \big(f(x_j) + g(x_j)u_j\big) \geq -\alpha( h(\bar{x}_{\mathcal{N}_i}) ), \; \forall i \in V_a, \label{eq:CBF constraint pi nom}
\end{align}
\end{subequations}
keeps all agents within the safety region \cite{ames2017control}. Note that 
\eqref{eq:CBF constraint pi nom} is linear in the decision variables $\bar{u}$. When the input constraint set $\mathcal U$ is a convex polytope, \eqref{eq:opt control policy} is a quadratic program (QP) and can be solved efficiently online for robotics applications \cite{ames2019control}.
We define the policy $\piqp: \mathcal{X}^N \to \mathcal{U}^N$ as the solution of the QP \eqref{eq:opt control policy} at the MAS state $\bar{x}$.
Note that \eqref{eq:opt control policy} is not a {distributed} framework for computing the control policy, since $u_i$ is indirectly coupled to the controls of {all} other agents via the constraint \eqref{eq:CBF constraint pi nom}.
Although there is work on using distributed QP solvers to solve \eqref{eq:opt control policy} (see e.g., \cite{pereiradecentralized}), these approaches are not easy to use in practice for real-time control synthesis of large-scale MAS.
To this end, we use an NN-based control policy that does not require solving the centralized QP online. We present the training setup for jointly learning both GCBF and a distributed safe control policy in the next section.

\begin{figure*}
    \centering
    \includegraphics[width=\linewidth]{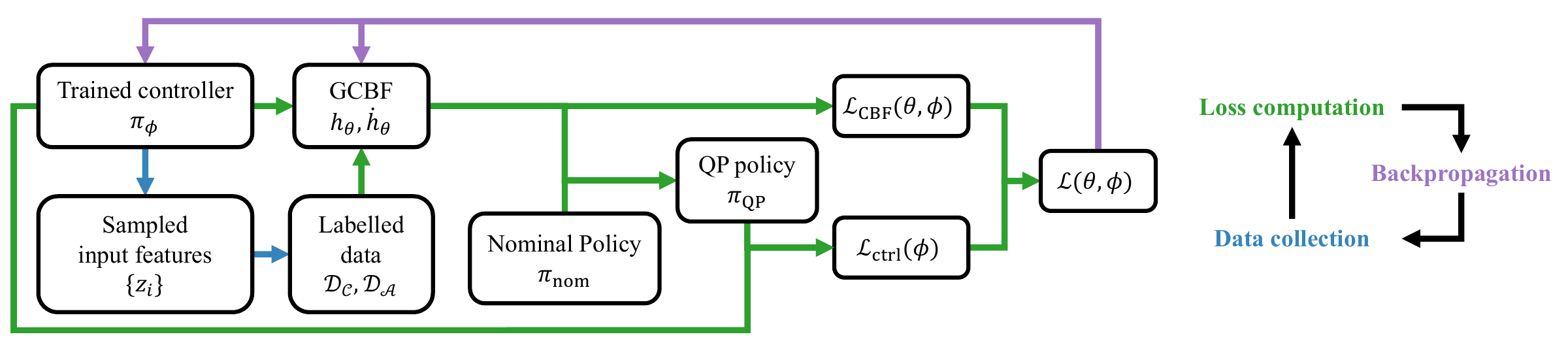}
    \caption{\textbf{GCBF+ training architecture: }The sampled input features are labeled as safe control invariant $\mathcal D_{\mathcal C}$ and unsafe $\mathcal D_{\mathcal A}$ using the previous step learned control policy $\pi_\phi$. A nominal control policy $\pi_{\mathrm{nom}}$ for goal reaching is used in a CBF-QP with the previously learned GCBF $h_\theta$ to generate $\pi_{\mathrm{QP}}$. Finally, the QP policy and the GCBF conditions are used to define the loss $\mathcal L$.}
\label{fig:algo-structure}
\end{figure*}

\section{GCBF+: framework for learning GCBF and distributed control policy}\label{sec:gcbf+}
\subsection{Neural GCBF and distributed control policy}\label{sec:learn-gcbf}

Drawing on the graph representation of arbitrary sized MAS introduced in \Cref{subsec:MAS_graph}, we apply GNNs to learn a GCBF $h_\theta$ and distributed control policy $\pi_\phi$ for parameters $\theta, \phi$. We transform the MAS graph into \textit{input features} to be used as the GNN input by constructing \textit{node features} and \textit{edge features} corresponding to the nodes and edges of the graph $G$. To learn a goal-conditioned control policy that can reach different goal positions, we introduce a goal node and an edge between each agent and their goal in the input features.

\textbf{Node features and edge features }
The node features $v_i\in \mathbb R^{\rho_v}$ encode information specific to each node. In this work, we take $\rho_v = 3$ and use the node features $v_i$ to one-hot encode the type of the node as either an agent node, goal node or LiDAR ray hitting point node.
The edge features $e_{ij}\in \mathbb R^{\rho_e}$, where $\rho_e > 0$ is the edge dimension, are defined as the information shared from node $j$ to agent $i$, which depends on the states of the nodes $i$ and $j$. 
Since the safety objective depends on the relative positions, one component of the edge features is the relative position $p_{ij} = p_j - p_i$. The rest of the edge features can be chosen depending on the underlying system dynamics, e.g., relative velocities for double integrator dynamics. 

\begin{figure}
    \centering
\includegraphics[width=0.7\columnwidth]{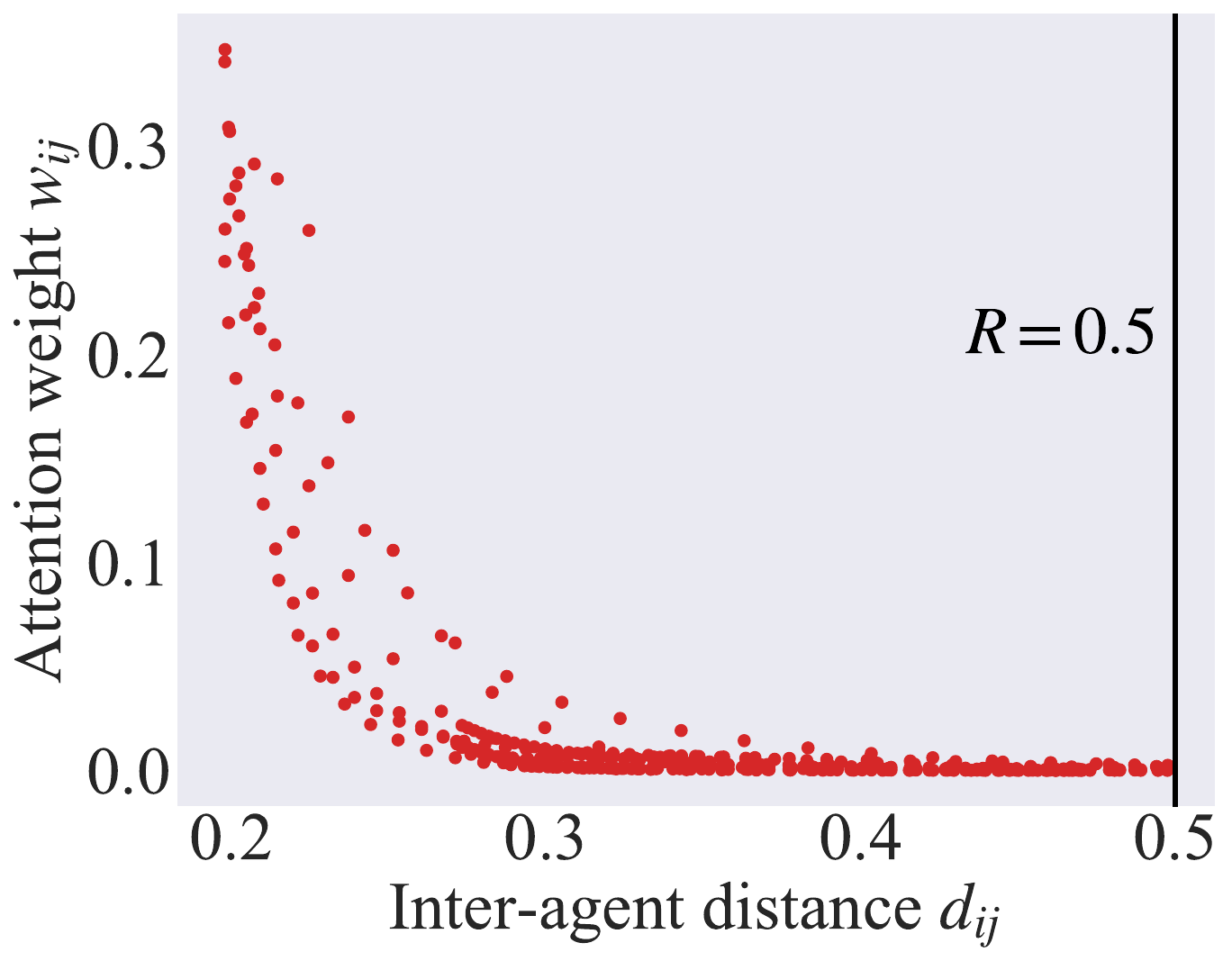}
\caption{\textbf{Satisfaction of \Cref{def: gcbf} in practice: }
    The attention weights $w_{ij}$ in \eqref{eq:attention_aggregate} are plotted against inter-agent distances $d_{ij}$ for sensing radius $R=0.5$ from multiple trajectories. The weight $w_{ij}$ approaches $0$ as the inter-agent distance $d_{ij}$ approaches $R$ without explicit supervision, showing that GCBF+ automatically learns to satisfy conditions 1) and 2) in \Cref{def: gcbf} via the approach outlined in \Cref{rmk:attn}.
}
\label{fig:attn}
\end{figure}

\textbf{GNN structure }
Thanks to the ability of GNN to take variable-sized inputs, we do not need to add padding nor truncate the input of the GCBF into a fixed-sized vector when $\big\lvert \tilde{\mathcal{N}}_i \big\rvert \not= M$. We define the input of $h_\theta$ to be input features $z_i=\left[z_{i1}^\top,\dots,z_{i\lvert\tilde{\mathcal{N}}_i\rvert}^\top\right]^\top$, where $z_{ij}=\left[v_i,v_j,e_{ij}\right]^\top$.
In GNN, we first encode each $z_{ij}$ to the feature space via an MLP $\psi_{\theta_1}$ to obtain $q_{ij} = \psi_{\theta_1}(z_{ij})$. Next, we use graph attention~\cite{li2019graph} to aggregate the features of the neighbors of each node, i.e.,
\begin{equation} \label{eq:attention_aggregate}
    q_i=\sum_{j\in\tilde{\mathcal{N}}_i} \underbrace{\softmax\big( \psi_{\theta_2}(q_{ij}) \big)}_{w_{ij}}\, \psi_{\theta_3}(q_{ij}),
\end{equation}
where $\psi_{\theta_2}$ and $\psi_{\theta_3}$ are two NNs parameterized by $\theta_2$ and $\theta_3$. $\psi_{\theta_2}$ is often called ``gate'' NN in literature \cite{li2015gated}, and the resulting \textit{attention weights} $w_{ij} \in [0, 1]$ (with $\sum\limits_{j \in \mathcal{N}_i} w_{ij} = 1$) encode how important the sender node $j$ is to agent $i$. Note that applying attention in GNNs is crucial for satisfying conditions 1) and 2) in \Cref{def: gcbf}, which are satisfied in our experiments since the attention weights and their derivatives go to $0$ as the inter-agent distance goes to $R$
without any additional supervision (see \Cref{rmk:attn} and \Cref{fig:attn}).
The aggregated information $q_i$ in \eqref{eq:attention_aggregate} is then passed through another MLP $\psi_{\theta_4}$ to obtain the output value $h_i = \psi_{\theta_4}(q_i)$ of the GCBF for each agent.
We use the same GNN structure for the control policy $\pi_\phi$.
Since the input features $z_i$ for agent $i$ only depend on the neighbors $\tilde{\mathcal{N}}_i$, the $\pi_\phi$ is distributed unlike \eqref{eq:opt control policy}.

\subsection{GCBF+ loss functions}
We train the GCBF $h_\theta$ and the distributed controller $\pi_\phi$ by minimizing the sum of the CBF loss $\mathcal{L}_{\mathrm{CBF}}(\theta, \phi)$ and the control loss $\mathcal{L}_{\mathrm{ctrl}}(\phi)$:
\begin{equation}
    \mathcal{L}(\theta, \phi) = \mathcal{L}_{\mathrm{CBF}}(\theta, \phi) + \mathcal{L}_{\mathrm{ctrl}}(\phi).
\end{equation} 
The CBF loss $\mathcal{L}_{\mathrm{CBF}}(\theta, \phi)$ and the control loss $\mathcal{L}_{\mathrm{ctrl}}(\phi)$ are defined as sums over each agent as
\begin{subequations}
\begin{equation}
    \mathcal{L}_{\mathrm{CBF}}(\theta, \phi) \coloneqq \sum_{i \in V_a} \mathcal{L}_{\mathrm{CBF}, i}(\theta, \phi), \label{eq: cbf loss}
\end{equation}
\begin{equation}
    \mathcal{L}_{\mathrm{ctrl}}(\phi) \coloneqq \sum_{i \in V_a} \mathcal{L}_{\mathrm{ctrl}, i}(\phi).
\end{equation}
\end{subequations}
Denote by $\mathcal D_{\mathcal C,i}, \mathcal D_{\mathcal A,i}$ the set consisting of labeled input features in the safe control invariant region and unsafe region, respectively.
The CBF loss $\mathcal{L}_{\mathrm{CBF}, i}$ penalizes violations of the GCBF condition \eqref{eq:graph CBF} and the (sufficient) safety requirement that the $0$-superlevel set $\mathcal C_{N, i}$ is a subset of $S_{N, i}$ (see \Cref{remark:CNi_subset}):
\begin{multline} \label{eq:L_cbf_i}
    \mathcal{L}_{\mathrm{CBF}, i}(\theta, \phi) \coloneqq \eta_\mathrm{deriv}\sum_{z_i}\left[\gamma-\dot h_\theta(z_i)-\alpha(h_\theta(z_i))\right]^+\\
        +\sum_{z_i\in\mathcal D_{\mathcal C, i}}\left[\gamma-h_\theta(z_i)\right]^+ + \sum_{z_i\in\mathcal D_{\mathcal A, i}}\left[\gamma+h_\theta(z_i)\right]^+,
\end{multline}
where $\gamma > 0$ is a hyper-parameter to encourage strict inequalities and $\eta_\mathrm{deriv} > 0$ weighs the GCBF condition \eqref{eq:graph CBF}.
We use a class-$\mathcal K$ function $\alpha(h) = \bar{\alpha} h$ for a constant $\bar{\alpha} > 0$. From hereafter, we abuse the notation and use $\alpha$ to refer to $\bar{\alpha}$.
The control loss $\mathcal{L}_{\mathrm{ctrl}, i}$ encourages the learned controller $\pi_\phi$ to remain close to the QP controller $\piqp$ (the solution to the QP \eqref{eq:opt control policy} with $h$ being the learned $h_\theta$ in the previous learning step), which is the closest control to $\pinom$ that maintains safety:
\begin{equation}
    \mathcal{L}_{\mathrm{ctrl}, i}(\phi) \coloneqq \eta_\mathrm{ctrl} \left\|\pi_\phi(z_i)-\pi_{\mathrm{QP}, i}(\bar x)\right\|,
\end{equation}
where $\eta_\mathrm{ctrl}$ is the control loss weight and $\pi_{\mathrm{QP}, i}(\bar x) \in \mathcal U$ is the $i$-th component of $\piqp(\bar x)$. 
{ In practice, it is possible that the QP is infeasible during training.
To this end, a relaxation term $\zeta\geq 0$ is added to the left-hand side of constraint \eqref{eq:CBF constraint pi nom} along with a penalty term for $\zeta$ with a large coefficient added to the objective function \eqref{eq: obj-cbf-qp}.
Once the CBF loss \eqref{eq: cbf loss} converges to zero, the QP is feasible on the sampled data points.
}

One of the challenges of evaluating the loss function $\mathcal L$ is computing $\dot h_\theta$. Similar to \cite{yu2023learning}, we estimate $\dot h_\theta$ by $\left(h_\theta(z_i(t_{k+1})) - h_\theta(z_i(t_k))\right)/\delta t$, where $\delta t = t_{k+1}-t_k$ is the timestep. Estimating $\dot{h}_\theta$ may be problematic if the set of neighbors $\tilde{\mathcal{N}}_i$ changes between $t_k$ and $t_{k+1}$. 
However, the learned attention weights satisfy conditions 1) and 2) in \Cref{def: gcbf} as noted in \Cref{sec:learn-gcbf}. Consequently, \Cref{lemma:cts} implies that $h_\theta$ is continuously differentiable, and our estimate of $\dot{h}_\theta$ is well behaved.
Note that $\dot h_\theta$ includes the inputs from agent $i$ and the neighbor agents $j\in \mathcal N_i$. Therefore,  when we use gradient descent and backpropagate $\mathcal{L}_i(\theta,\phi)$ during training, the gradients are passed to not only the controller of agent $i$ but also the controllers of all neighbors in $\tilde{\mathcal N}_i$.\footnote{We re-emphasize the fact the neighbors' inputs are not required for $\pi_\phi$ during testing.} More details on the training process are provided in \Cref{sec:implmnt details}. The training architecture is summarized in \Cref{fig:algo-structure}.

{
\begin{Remark} \label{remark:new_ctrl_loss} \textbf{Benefits of the new loss function} ~ In the prior works ~\cite{zhang2023distributed,qin2021learning}, the nominal policy $\pinom$ is used instead of $\piqp$ in $\mathcal L_{\mathrm{ctrl}}$. As a result, these approaches suffer from a trade-off between safety and goal-reaching and often learn a sub-optimal policy that compromises safety for liveness, or liveness for safety. In contrast, our proposed method allows the loss to converge to zero, and thus, does not have this trade-off.
\end{Remark}
}

\Cref{fig:eta ctrl traj} plots the trajectories of an agent in the presence of an obstacle under learned policies with $\pi_\mathrm{nom}$ and $\pi_\mathrm{QP}$ in the $\mathcal L_\mathrm{ctrl}$. When $\pi_\mathrm{nom}$ is used, for small values of $\eta_\mathrm{ctrl}$, the learned controller over-prioritizes safety leading to conservative behavior (the top left plot). For large $\eta_\mathrm{ctrl}$, the learned policy over-prioritizes goal reaching, leading to unsafe behaviors (the top right plot). Using an optimal $\eta_\mathrm{ctrl}$, it is possible to get a desirable behavior as in the top middle plot. In contrast, when $\pi_\mathrm{QP}$ is used in the control loss, the goal-reaching and the safety losses do not compete with each other and it is possible to get a desirable behavior without extensive hyper-parameter tuning as can be seen in the bottom plots in \Cref{fig:eta ctrl traj}. Note that when $\pi_\mathrm{nom}$ is used, even with the optimal value of $\eta_\mathrm{ctrl}$, the learned input (shown with black arrows) does not align with the nominal input (shown with orange arrows), meaning that the total loss may not converge to zero in such formulations unless the nominal policy is also safe. On the other hand, when $\pi_\mathrm{QP}$ is used, the two control inputs have much more similar values, which allows for the total loss to converge to $0$.

\begin{figure}
    \definecolor{pol_color}{HTML}{e6830b}
    \centering  
\includegraphics[width=\linewidth]{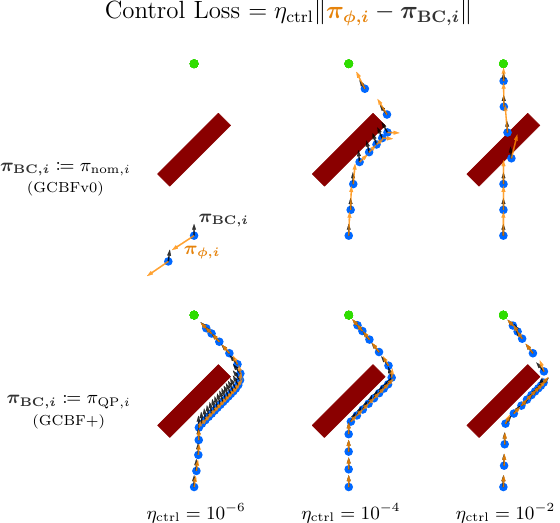}
\caption{\textbf{Comparison of the choice of control loss: }The learned policy $\boldsymbol{\textcolor{pol_color}{\pi_\phi}}$ is sensitive to the choice of $\eta_{\mathrm{ctrl}}$ when using $\pi_{\mathrm{nom}}$ as in GCBFv0 \cite{zhang2023distributed} (top) since a learned policy $\boldsymbol{\textcolor{pol_color}{\pi_\phi}}$ that is close to $\pi_{\mathrm{nom}}$ may be unsafe and not satisfy the GCBF conditions \eqref{eq:graph CBF}. Consequently, choosing $\eta_{\mathrm{ctrl}}$, which controls the relative weight between $\mathcal{L}_{\mathrm{CBF}}$ and $\mathcal{L}_{{ctrl}}$, becomes a balancing act of staying close to $\pi_{\mathrm{nom}}$ while remaining safe.
    In contrast, by definition of $\pi_{\mathrm{QP}}$ \eqref{eq:opt control policy}, the control $\boldsymbol{\textcolor{pol_color}{\pi_\phi}}$ is already safe. Hence, the safety of $\boldsymbol{\textcolor{pol_color}{\pi_\phi}}$ is not sensitive to $\eta_{\mathrm{ctrl}}$ when using $\pi_{\mathrm{QP}}$ for the control loss, as in GCBF+.
}
\label{fig:eta ctrl traj}
\end{figure}

\subsection{Data collection and labeling}\label{sec:training-data}

The training data $\{z_i\}$ is collected over multiple scenarios and the loss is calculated by evaluating the CBF conditions on each sample point. We use an on-policy strategy to collect data by periodically executing the learned controller $\pi_\phi$, which helps align the train and test distributions. 

When labeling the training data as $\mathcal D_{\mathcal C, i}$ or $\mathcal D_{\mathcal A, i}$, it is important to note that an input feature $z_i$ that is not in a collision may be unable to prevent an inevitable collision in the future under actuation limits.
For example, under acceleration limits, an agent that is moving too fast may not have enough time to stop, resulting in an inevitable collision.
Therefore, we cannot na\"ively label all the input features as $\mathcal D_{\mathcal C, i}$ if they are not in any collision at the current time step
unless there exists a control policy that can keep them safe in the future. 
However, as noted in \cite{so2023train,hsu2021safety,fisac2019bridging,schafer2023scalable}, computing an infinite horizon control invariant set for high dimensional nonlinear systems is computationally challenging, and often, various approximations are used for such computations. In this work, we use a finite-time reachable set as an approximation. 
At any given learning step, given a graph $G$, and the corresponding input features $z_i, i\in V_a$, we use the learned policy from the previous iteration to propagate the system trajectories for $T$ timesteps. If the entire trajectory remains in the safe set $\mathcal S_{N, i}$ for agent $i$, then $z_i$ is added to the set $\mathcal D_{\mathcal C, i}$. If there exist collisions in $z_i$, it is added to the set $\mathcal D_{\mathcal A, i}$. Otherwise, it is left unlabeled. 
As $T \to \infty$, this recovers the infinite horizon control invariant set, but is not tractable to compute. Instead, we choose a large but finite value of $T$ as a hyperparameter in our numerical experiments.  

{ The training data can be collected and labeled efficiently as follows. With the learned policy from the previous iteration, we sample feature trajectories $\{z_i^0, \dots, z_i^{\bar T}\}$ for some $\bar T \geq T$. If any feature $z_i^{t}$ is unsafe, it is added to $\mathcal D_{\mathcal A, i}$. If $T$ features $z_i^{t+1}, \dots, z_i^{t+T}$ after a feature $z_i^t$ including itself are all safe, then $z_i^t$ is added to $\mathcal D_{\mathcal C, i}$. Otherwise, it remains unlabeled.}

\begin{Remark} \textbf{Importance of control-invariant labels} ~
Note that GCBFv0 ~\cite{zhang2023distributed} does \textit{not} use the concept of the safe control invariant set during training. Instead, similar to prior works ~\cite{qin2021learning,dawson2022safe}, the learned CBF is enforced to be positive on the entire safe set $\mathcal S_{N, i}$, even for states that are not control invariant. 
{ Prior works attempt to alleviate these issues by estimating the control-invariant using a shrunk safe set and introducing a fixed margin between the safe and unsafe sets. However, for high relative degree dynamics, this margin is state-dependent. Therefore, this estimation of the control-invariant set does not result in a true control-invariant set.}
As noted in \cite{so2023train}, if the safe set $\mathcal S_{N, i}$ is not control-invariant, then no valid CBF exists that is positive on $\mathcal S_{N, i}$. We later investigate the importance of the quality of the control-invariant labels (as controlled by $T$) in \Cref{subsec:sensitivity} and find that poor approximations of the control-invariant set $\mathcal C_{N, i}$ via small values of $T$ leads to large drops in the safety rate. This provides some insight into the performance improvements of GCBF+ over GCBFv0.
\end{Remark}

\section{Experiments: implementation details}\label{sec: exp imp details}

In this section, we introduce the details of the experiments, including the implementation details of GCBF+ and the baseline algorithms, and the details of each environment. 

\textbf{Environments}~ We conduct experiments on five different environments consisting of three 2D environments (SingleIntegrator, DoubleIntegrator, DubinsCar) and two 3D environments (LinearDrone, CrazyflieDrone). See Appendix \ref{app: env details} for details. The parameters are $R = 0.5, r = 0.05$ in all environments. 
The total time steps for each experiment is $4096$.

\textbf{Evaluation criteria}~ We use safety rate, reaching rate, and success rate as the evaluation criteria for the performance of a chosen algorithm. { The \textbf{safety rate} is defined as the ratio of agents not colliding with either obstacles or other agents during the experiment period over all agents.} {As an example, in a scenario consisting of $100$ agents, if $15$ agents undergo a collision at any time step, then the safety rate will be $85\%$.} { The goal \textbf{reaching rate}, or simply, the reaching rate, is defined as the ratio of agents that reach their goal location by the end of the experiment period\footnote{Note that we do not consider collision dynamics in our experiments and the agents continue moving after a collision.}.} The \textbf{success rate} is defined as the ratio of agents that are safe and reach their goals. 
For each environment, we evaluate the performance over $32$ instances of randomly chosen initial and goal locations for $3$ policies trained with different random seeds. We report the mean rates and their standard deviations for the $32$ instances for each of the $3$ policies (i.e., average performance over $96$ experiments). 

\subsection{Implementation details}\label{sec:implmnt details}

Our learning framework contains two neural network models: the GCBF $h_\theta$ and the controller $\pi_\phi$. The sizes of the MLP layers in them are shown in \Cref{tab:mlp-size}. {The resulting trained control policy is loaded on each agent locally and requires $\SI{1.5}{\mega\byte}$ of memory.}
To make the training easier, we define $\pi_\phi$ = $\pi^\mathrm{NN}_\phi + \pinom$, where $\pi^\mathrm{NN}_\phi$ is the NN controller, so that $\pi^\mathrm{NN}_\phi$ only needs to learn the deviation from $\pinom$. 

\begin{table}[h]
    \centering
    \caption{Size of layers of the MLP used in the GNN.}
    \begin{tabular}{c|ccc}
     \toprule
        MLP & Hidden layer size & Output layer size\\
        \midrule
        $\psi_{\theta_1}$ & $256\times256$ & $128$ \\
        $\psi_{\theta_2}$ & $128\times128$ &  $1$ \\
        $\psi_{\theta_3}$ & $256\times256$ & $128$ \\
        $\psi_{\theta_4}$ & $256\times256$ & $1$ for $h_\theta$ and $m$ for $\pi_\phi$ \\
        \bottomrule
\end{tabular}
    \label{tab:mlp-size}
\end{table}

We use Adam~\cite{kingma2014adam} to optimize the NNs for $1000$ steps in training. The training time is around $3$ hours on a 13th Gen Intel(R) Core(TM) i7-13700KF CPU @ $3400\mathrm{MHz}$ and an NVIDIA RTX 3090 GPU. 
We choose the hyper-parameters following \Cref{tab: hyperparams}, where "lr cbf" and "lr policy" denote the learning rate for $h_\theta$ and $\pi_\phi$, respectively. We set $\alpha=1.0$, $\gamma=0.02$, and $\eta_\mathrm{deriv}=0.2$ for all the environments.

\begin{table}[h]
    \centering
    \caption{Hyper-parameters used in our training}
    \label{tab: hyperparams}
    \begin{tabular}{c|cccccc}
    \toprule
        Environment & $T$ & $\eta_\mathrm{ctrl}$ & lr policy & lr GCBF \\
        \midrule
        SingleIntegrator & $1$ & $10^{-4}$ & $10^{-5}$ & $10^{-5}$ \\
        DoubleIntegrator & $32$ & $10^{-4}$ & $10^{-5}$ & $10^{-5}$ \\
        DubinsCar & $32$ & $10^{-5}$ & $3\times10^{-5}$ & $3\times10^{-5}$ \\
        LinearDrone & $32$ & $10^{-3}$ & $10^{-5}$ & $10^{-5}$ \\
        CrazyflieDrone  & $32$ & $3\times10^{-5}$ & $10^{-5}$ & $10^{-4}$ \\
        \bottomrule
    \end{tabular}
\end{table}

\subsection{Baseline methods}\label{sec: baselines}
We compare GCBF+ with GCBFv0 \cite{zhang2023distributed}, InforMARL \cite{nayak2023scalable}, {MPC \cite{sathya2018embedded},} and centralized and distributed variants of hand-crafted CBF-QPs \cite{wang2017safety}. 
We use a modified version of the GCBFv0 introduced in \cite{zhang2023distributed} where we remove the online policy refinement step since it requires multiple rounds of inter-agent communications to exchange control inputs during execution and does not work in the presence of actuation limits.

The InforMARL algorithm is a variant of MAPPO~\cite{yu2022surprising} that uses a GNN architecture for the actor and critic networks.
We use a reward function that consists of three terms. First, we penalize deviations from the nominal controller, i.e.,
{\begin{equation}
    R_{\mathrm{nom},i} \coloneqq -\frac{1}{2} \norm{ u_i - u_i^{\mathrm{nom}}}^2.
\end{equation}}
To improve performance, we use a sparse reward term for reaching the goal, i.e.,
\begin{equation} \label{eq:sparse_goal_rew}
    R_{\mathrm{goal},i} \coloneqq \begin{dcases}1.0, & \norm{p_i - p^g_i} \leq 2r \\ 0 & \mathrm{otherwise}\end{dcases}
\end{equation}
Safety is incorporated by adding the following term to the reward function to penalize collisions, similar to \cite{semnani2020multi,chen2017decentralized}:
\begin{align}
    R_{\mathrm{col},i} &\coloneqq \max\left\{\max_{j\in\tilde{\mathcal N}_i} R_{\mathrm{col,agent},ij}, \max_{j\in n_\mathrm{rays}} R_{\mathrm{col,obs}, ij}\right\},
\end{align}
where
\begin{align}
    R_{\mathrm{col,agent},ij} &\coloneqq \begin{dcases}
    -1 & \norm{p_i-p_j} < 2r, \\
    \frac{\norm{p_i - p_j}}{2r} - 2 & 2r \leq \norm{p_i - p_j} \leq 4r, \\
    0 & 4r < \norm{p_i - p_j},
    \end{dcases}
\end{align}
for inter-agent collisions, and
\begin{align}
    R_{\mathrm{col,obs},ij} &\coloneqq \begin{dcases}
    -1 & \norm{p_i-p_j^{(i)}} < r, \\
    \frac{\norm{p_i - p_j^{(i)}}}{2r} - 2 & r \leq \norm{p_i - p_j^{(i)}} \leq 3r, \\
    0 & 4r < \norm{p_i - p_j^{(i)}}.
    \end{dcases}
\end{align}
for agent-obstacle collisions.
The final reward function is a sum of the above terms weighted by $\lambda_{\text{nom}}$, $\lambda_{\text{goal}}$ and $\lambda_{\text{col}} > 0$.
{
\begin{equation}\label{eq:Informarl reward}
    R_{i} \coloneqq \lambda_{\text{nom}} R_{\text{nom},i} + \lambda_{\text{goal}} R_{\text{goal},i} + \lambda_{\text{col}} R_{\text{col},i}
\end{equation}
}

{
We use a distributed MPC that does not assume inter-agent communication \cite{sathya2018embedded}, similar to the other baselines.
At each time step, since the control actions of neighbor agents are unknown, they are assumed to follow a constant velocity model, and each agent solves the following $H$-horizon optimal control problem:
\begin{subequations}\label{eq:mpc_baseline}
\begin{align}
    \min_{u_i} \quad & \|u_i- u_i^{\mathrm{nom}}\|^2, \\
    \text{s.t.} \quad &  u_i \in \mathcal{U}, \\
& x_i^k \in S_{N,i}, \quad k = 0, \dots, H-1 \label{eq:constr:mpc_safety}
\end{align}
\end{subequations}
where we use a constant velocity prediction of other agents in the definition of the safe set $S_{N,i}$ in \eqref{eq:constr:mpc_safety}.
In the SingleIntegrator environment, we estimate the current velocity using the past two position observations, while in other environments, velocity is included in agents' states. The MPC baseline is implemented in CasADi \cite{andersson2019casadi} using the SNOPT \cite{gill2005snopt} optimizer.
}

For the hand-crafted CBF-QPs, we define a pairwise higher-order CBF \cite{nguyen2016exponential} between each pair of agents $(i. j)$\footnote{Except for the single integrator environment, where we use the same $h_0$ as in \eqref{eq:h0_distance} and define $h \coloneqq h_0$. For the double integrator environment, \cite{wang2017safety} proposes a CBF that considers the input constraints, which we compare with in \Cref{app: cbf-input-constraint}.}:
\begin{align}
    h_{0} & =\sum_{l\in \mathbb P}(p_i^l-p_j^l)^2- (2r)^2 \label{eq:h0_distance} \\
    h  & = \dot h_{0} + \alpha_{0} h_{0},
\end{align}
where $\mathbb P = \{x, y\}$ for 2D environments and $\mathbb P = \{x, y, z\}$ for 3D, $\alpha_{0}\in \mathbb R_+$ is a constant and $r$ is the radius of the agent. 
In the LiDAR environments, we also use a pairwise CBF between each agent $i$ and its LiDAR hitting points $j$, defined as
\begin{align}
    h'_{0} & =\sum_{l\in \mathbb P}\left(p_i^l-p_j^{(i),l}\right)^2- r^2 \\
    h'  & = \dot h'_{0} + \alpha_{0} h'_{0},
\end{align}
where $p_j^{(i),l}$ is the $l$-th position dimension of LiDAR hitting point $p_j^{(i)}$.
We consider two CBF-QP frameworks \cite{wang2017safety}:
\begin{itemize}
    \item Centralized CBF: In this framework, inputs of all the agents are solved together by setting up a centralized QP containing CBF constraints of all the agents. In this case, the CBF condition is
    \begin{align}\label{eq: hc CBF condition}
        \dot h  + \alpha h  \geq 0, \quad \dot h' + \alpha h' \geq 0.
    \end{align}
    \item Decentralized CBF: In this framework, each agent computes its control input but the CBF condition is \textit{shared} between the neighbors as in \cite{wang2017safety}. Let $\dot x_{ij} = f_{ij}(x_{ij}) + g_{ij, i}(x_{ij})u_{i} + g_{ij, j}(x_{ij})u_{j}$ denote the combined dynamics of the pair $(i,j)$ where $f_{ij}, g_{ij,i}, g_{ij,j}$ can be obtained from combining the agent dynamics. Then, the constraint used in the agent $i$'s QP is:
    \begin{align}\label{eq: hij i share}
        \frac{\partial h }{\partial x_{ij}}g_{ij, i}(x_{ij})u_i \geq  -\frac{1}{2}\left(\alpha h (x_{ij}) +\frac{\partial h }{\partial x_{ij}}{f_{ij}}(x_{ij})\right),
    \end{align}
    while that in agent $j$'s QP is:
    \begin{align}\label{eq: hij j share}
         \frac{\partial h }{\partial x_{ij}}g_{ij, j}(x_{ij})u_j \geq  -\frac{1}{2}\left(\alpha h (x_{ij}) +\frac{\partial h }{\partial x_{ij}}{f_{ij}}(x_{ij})\right),
    \end{align}
    so that the sum of the constraints \eqref{eq: hij i share} and \eqref{eq: hij j share} recovers the CBF condition \eqref{eq: hc CBF condition}. Since the obstacles are not controlled, for the agent-LiDAR pair, the constraint used in agent $i$'s QP is:
    \begin{align}
         \frac{\partial h' }{\partial x_{ij}}g_{ij, j}(x_{ij})u_j \geq  -\left(\alpha h' (x_{ij}) +\frac{\partial h' }{\partial x_{ij}}{f_{ij}}(x_{ij})\right).
    \end{align}
\end{itemize}
For both centralized and decentralized approaches, we design two baselines with $\alpha=1.0$ and $\alpha=0.1$, respectively. The resulting $4$ baselines are named CBF1.0, CBF0.1, DecCBF1.0, and DecCBF0.1, respectively. 

\begin{figure*}
    \centering
    \begin{subfigure}{\linewidth}
        \centering
        \includegraphics[width=\linewidth]{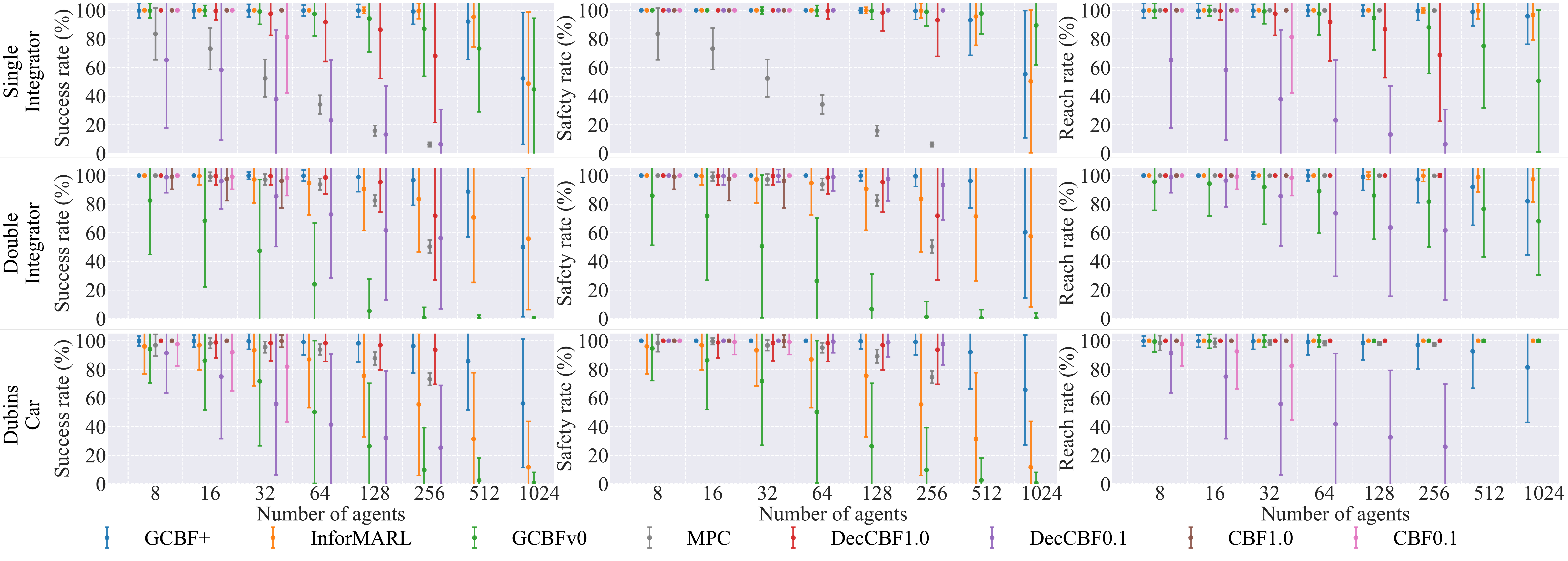}
        \caption{2D environments}
    \end{subfigure}

    \vspace{1.8em}
    \begin{subfigure}{\linewidth}
        \centering
        \includegraphics[width=\linewidth]{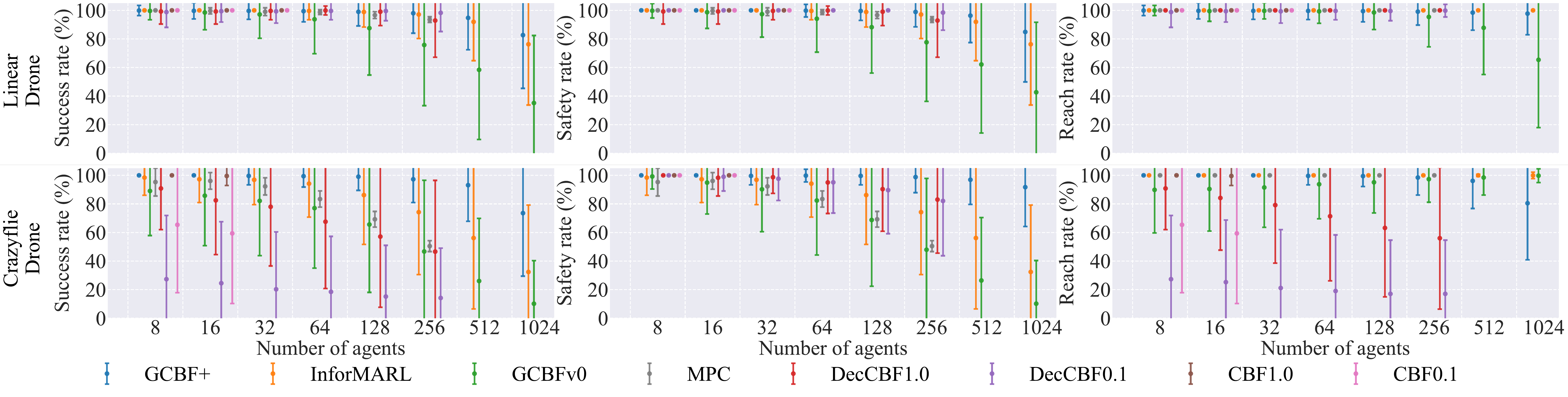}
        \caption{3D environments}
    \end{subfigure}
    \caption{Success (left), safety (middle), and reach (right) rates for an increasing number of agents for fixed area width $l$.}
    \label{fig: 2D3D rates}
\end{figure*}

\section{Numerical experiments: results}\label{sec:experiments}
We conduct simulation experiments to examine the scalability, generalizability, and effectiveness of the proposed method. In all experiments, the initial position of the agents and goals are uniformly sampled from the set $\mathbb{P}_0 = [0, l]^\mathfrak{n}$ for an area width $l > 0$ which we specify for each environment. The \textit{density} of agents can be approximately computed as $N / l^\mathfrak{n}$. Hence, a smaller value of $l$ results in a higher density of agents and thus is more challenging to prevent collisions. We train GCBF+ and GCBFv0 on an environment with $8$ agents and $8$ obstacles with $l=4$ for 2D and $l=2$ for 3D environments.

\subsection{Performance under increasing number of agents} \label{subsec:exp_increasing_agent}
We first perform experiments in an obstacle-free setting where we test the algorithms for a fixed $l$ but increase the number of agents $N$ from $8$ to $1024$. This tests the ability of each algorithm to maintain safety as the density of agents and goals in the environment increases by over $100$-fold. We use $l = 8$ for SingleIntegrator and DoubleIntegrator environments, $l=16$ for the DubinsCar environment, and $l=4$ for both the 3D environments\footnote{
Note that we use more challenging (i.e., smaller) values of $l$ for the 2D ($8, 16$ vs $32$) and 3D ($4$ vs $16$) as compared to our previous work \cite{zhang2023distributed}.
} and show the resulting success rate, safety rate and reach rate in \Cref{fig: 2D3D rates}.

\textbf{Centralized methods do not scale with increasing number of agents. }
As expected, the centralized methods (i.e., CBF1.0 and CBF0.1) require increasing amounts of computation time as the number of agents increases. Consequently, we were unable to test CBF1.0 and CBF0.1 for more than $32$ agents in all environments due to exceeding computation limits.

\textbf{{Hand-crafted CBFs are overly conservative. }}
The decentralized {hand-crafted} CBF-QP with $\alpha=0.1$ has comparable safety performance to GCBF+. However, it is much more conservative than GCBF+ and compromises its goal-reaching ability, as evident from the low reach rates in all environments. For larger $\alpha$, the decentralized {hand-crafted} CBF-QP method fails to maintain a high safety rate as the controls become saturated by the control limit. Although the decentralized {hand-crafted} CBF can be scaled to a large number of agents assuming that each agent can perform computation for its control input locally, in our experiments, we simulate the decentralized controller on one computer node and hence are constrained by the memory and computation limits of the computer. Thus, we could not perform experiments for more than $256$ agents. However, the downward trend of the reaching rate illustrates that the decentralized {hand-crafted} CBF-QP method becomes more conservative as the environment gets denser.

{
\textbf{MPC has low safety rates because of the non-cooperative agents. }
MPC's success rate drops significantly in all environments as the number of agents increases because the agents do not know the \textit{actual} actions of other agents and only make predictions from observations.
This problem is exacerbated in
the SingleIntegrator environment where the agent velocities can change instantaneously. This leads to the predicted motions of neighboring agents differing significantly from the actual actions, resulting in low safety rates. 
On the other hand, for environments with higher relative degrees, the agent velocities do not change instantaneously. Consequently, the constant-velocity model yields relatively more accurate predictions, and hence, improved safety rates. However, the performance of MPC is still considerably poorer than that of GCBF+ in all experiments. This is due to the fact that, while GCBF+ also does not have access to the control inputs of other agents during \textit{online execution}, however, during \textit{offline centralized training}, GCBF+ policy has access to the control inputs of other agents which also come from the same GCBF policy being used by the different agents. As a result, the GCBF+ policy can ensure that the GCBF conditions are satisfied.
Then, during online execution, the GCBF condition is satisfied if all other agents follow the same GCBF policy that was used during offline training \textit{without} the need for other agents to communicate their control inputs.
While distributed MPC can be scaled to more than $256$ agents, the computation is carried out on a single computer in our experiments, and the computational requirement of MPC exceeds the capability of one computer. Therefore, we do not test MPC for more than $256$ agents. Lastly, since MPC gets even slower with LiDAR data, we do not use MPC as a baseline when conducting experiments with obstacles in \Cref{sec: static-obs}.
}

\begin{figure*}
    \centering
    \includegraphics[width=\textwidth]{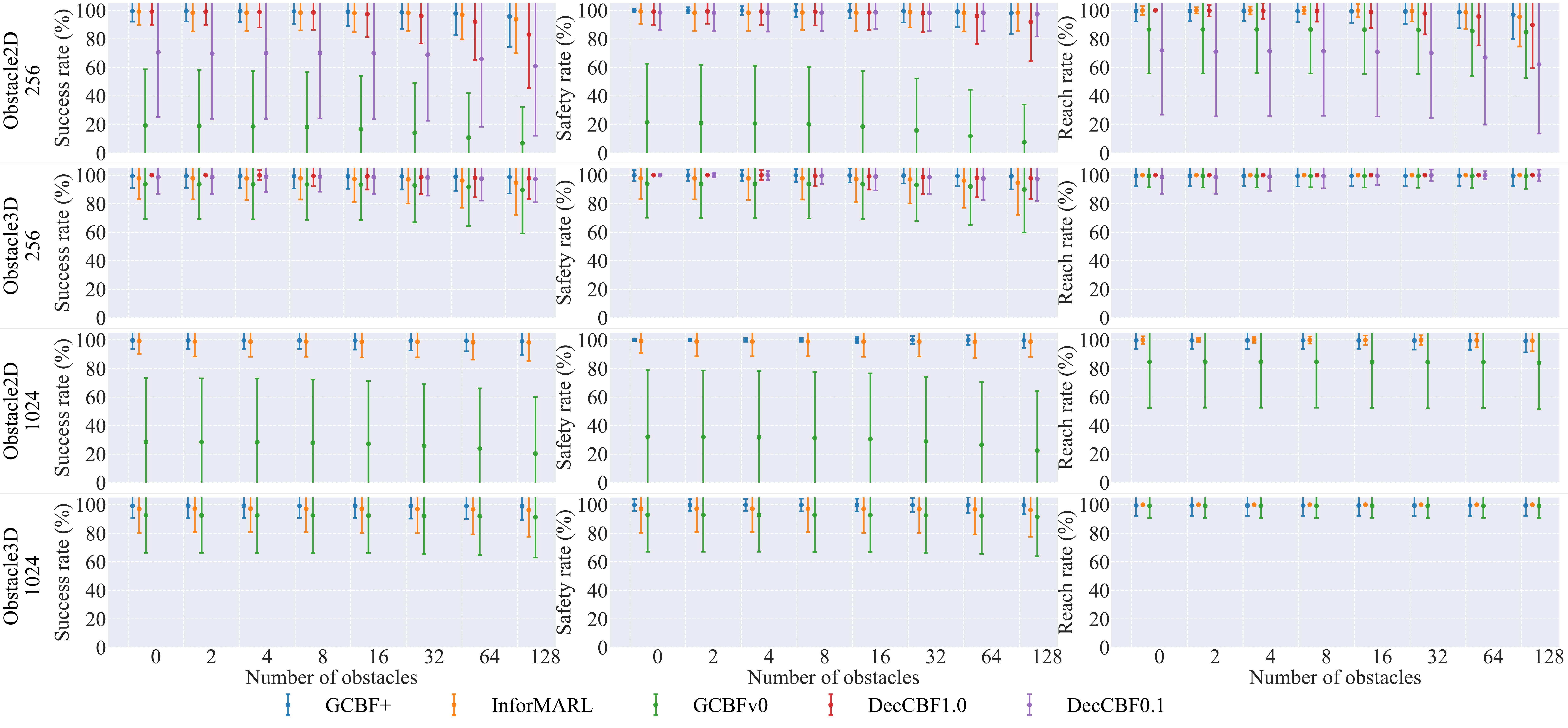}
    \caption{\textbf{Obstacle environment performance}: 
    Success (left), safety (middle), and reach (right) rates for the obstacle environment in one 2D and one 3D environment, namely, DoubleIntegrator and LinearDrone environments. In these experiments, the number of agents as well as the area size are kept constant while the number of obstacles are increased. The first set of experiments is conducted on 256 agents with all the baselines. The second set of experiments is conducted with 1024 agents
    with only GCBF and InforMARL for comparison since we were unable to simulate other baselines for more than 256 agents due to computational limits.}
    \label{fig: obs cases}
\end{figure*}

\textbf{GCBFv0 struggles with safety for dynamics with high relative degrees. }
While GCBFv0 performs comparably on the SingleIntegrator environment, the performance deteriorates drastically on all other dynamics. This is because GCBFv0 relies on an accurate hand-crafted safe control invariant set during training, which is difficult to estimate in the presence of control limits for dynamics with high relative degrees. The safe control invariant set is easy to estimate for relative degree $1$ environments such as SingleIntegrator, where it can be taken as the complement of the unsafe set. However, for high relative degree dynamics with control limits, the naive estimation method used by GCBFv0 breaks down, causing the safety rate and thus success rate to drop significantly. Another potential reason for the poor safety of GCBFv0 is that it uses $\pinom$ in the control loss, which forces a trade-off between safety and goal-reaching (see \Cref{remark:new_ctrl_loss}).

\textbf{GCBF+ performs well on nonlinear dynamics. }
We observe that all methods have lower success rates in environments that have nonlinear dynamics (DubinsCar, CrazyflieDrone) compared to ones with linear dynamics (SingleIntegrator, DoubleIntegrator, LinearDrone). The performance gap between GCBF+ and other methods is more clear in these challenging environments. On the DubinsCar environment, GCBF+ achieves a $44\%$ higher (compared to InforMARL) and $55\%$ higher (compared to GCBFv0) success rate. On the CrazyflieDrone environment, GCBF+ achieves a $45\%$ higher (compared to InforMARL) and $65\%$ higher (compared to GCBFv0) success rate. Hence, GCBF+ generalizes better than the baseline algorithms, particularly for environments with nonlinear dynamics.

\textbf{GCBF+ reach rate declines faster than InforMARL and MPC. }
While the safety rate for GCBF+ is the best among the baselines for denser environments, its reach rate declines as the number of agents increases, while the reach rate for InforMARL {and MPC} stays consistently near $100\%$ in all environments. The main reason for this decline is that GCBF+ focuses on safety and delegates the liveness (i.e., goal-reaching) requirements to the nominal controller which is unable to resolve deadlocks. Hence, one potential reason for the lower reach rates of GCBF+ as the density increases is that the learned controller is unable to resolve deadlocks that occur more frequently with increasing density. On the other hand, InforMARL has a sparse reward term for reaching the goal \eqref{eq:sparse_goal_rew} and hence, it is incentivized to learn a controller that can resolve deadlocks at the cost of temporarily deviating from the nominal controller and sacrificing safety, which is evident from the significant drop in the safety rate for InforMARL. 

\begin{figure*}
    \centering
    \includegraphics[width=0.48\textwidth]{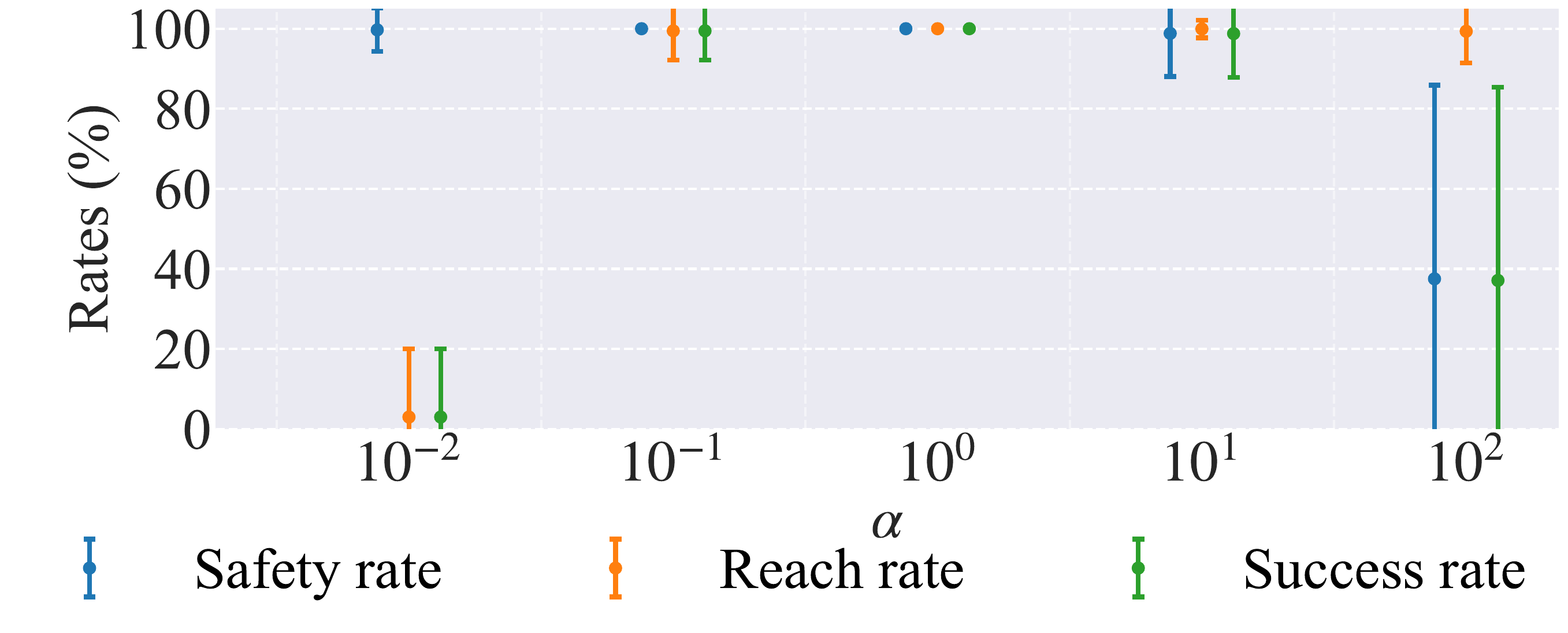}
    \includegraphics[width=0.48\textwidth]{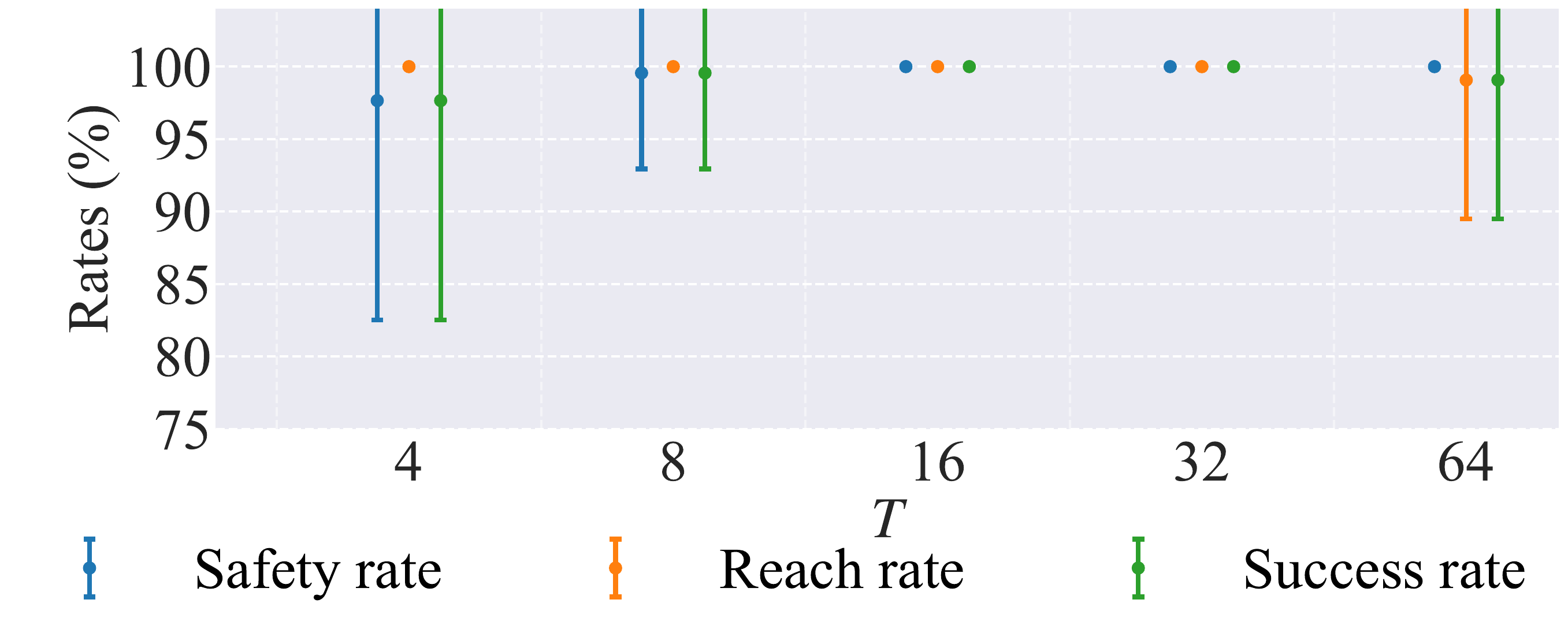}
    \caption{\textbf{Left}: Sensitivity of GCBF+ to the CBF class$-\mathcal K$ function parameter $\alpha$. \textbf{Right}: Sensitivity of GCBF+ to prediction horizon $T$ for computation of the control invariant set $\mathcal S_{c, i}$.}
    \label{fig:sens}
\end{figure*}

\subsection{Performance under increasing number of obstacles}\label{sec: static-obs}
In the next set of simulation experiments, we fix the number of agents $N$ and the area width $l$ and vary the number of obstacles present from $0$ to $128$. For the 2D DoubleIntegrator environment, we consider $(N=256, l=16)$ and $(N=1024, l=32)$, where the obstacles are cuboids with side lengths uniformly sampled from $[0.1, 0.5]$ and each agent generates $32$ equally spaced LiDAR rays to detect obstacles. For the 3D LinearDrone environment, we consider $(N=256, l=8)$ and $(N=1024, l=12)$, where the obstacles are spheres with radius uniformly sampled from $[0.15, 0.3]$ and each agent generates $130$ equally spaced LiDAR rays to detect obstacles. 

The success rate, safety rate, and reach rate for all cases are shown in \Cref{fig: obs cases}.
Overall, we observe similar trends as the previous experiment in \Cref{subsec:exp_increasing_agent}. In all environments, GCBF+ has the highest success rates compared with the baselines.
Trained with just $8$ agents and $8$ obstacles, GCBF+ can achieve a $>98\%$ success rate with $256$ (and $1024$) agents and $128$ obstacles.
InforMARL performs well but is behind GCBF+. Other baselines have much lower success rates compared with GCBF+ and InforMARL. GCBFv0 does not perform well since it does not account for control limits. The decentralized {hand-crafted} CBF-QPs perform poorly in the 2D environment due to their conservatism and in the 3D environment due to saturation from the control limits.

\subsection{Sensitivity to hyper-parameters} \label{subsec:sensitivity}

We next perform a sensitivity analysis of our proposed method on the DoubleIntegrator environment to investigate the effect of two hyper-parameters: $\alpha$ and $T$. The $\alpha$ parameter is used to define the CBF derivative condition \eqref{eq:L_cbf_i}, while $T$ is used to label safe control invariant data and unsafe data (see \Cref{sec:training-data}). We plot the success, reach, and safety rates while varying $\alpha$ from $10^{-2}$ to $10^2$ in the left plot in \Cref{fig:sens}. The results showed that using $\alpha=10^{-2}$ led to a drop in the reach rate, while using $\alpha=10^{2}$ led to a drop in the safety rate. This behavior can be attributed to the fact that for very small values of $\alpha$, the CBF condition becomes overly conservative, resulting in poor goal-reaching. For very large values of $\alpha$, safety can be compromised as the CBF condition allows the system to move towards the unsafe set at a faster rate. This, along with the fact that the control inputs are constrained, may lead to a violation of safety. 
Note that for 
$\alpha\in [0.1, 10]$, the performance of GCBF+ does not change much. This implies that GCBF+ is robust to a large range of $\alpha$. 

\begin{figure}
    \centering
    \includegraphics[width=1\columnwidth]{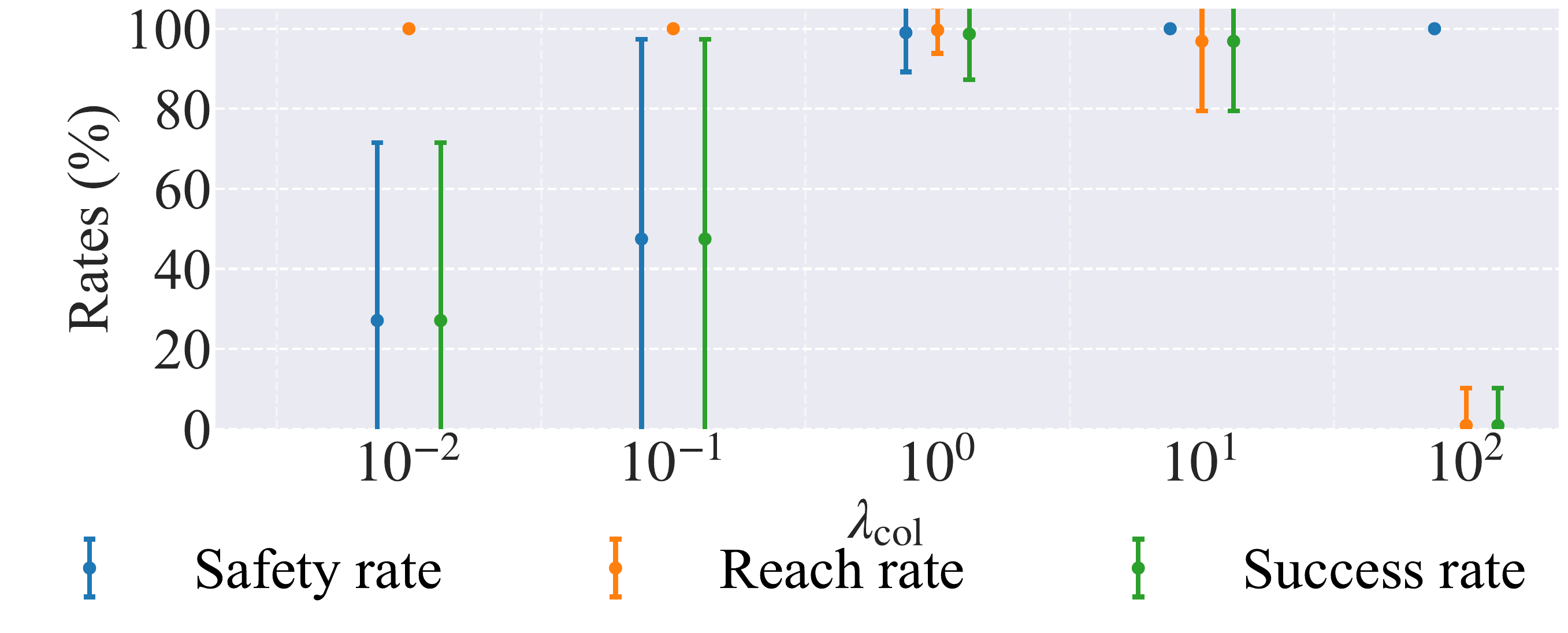}
    \caption{Sensitivity of InforMARL to the weight $\lambda_\mathrm{col}$ in \eqref{eq:Informarl reward}.}
    \label{fig:info sens}
\end{figure}

\begin{figure*}
    \centering
    \begin{subfigure}{0.24\linewidth}
        \centering
        \includegraphics[width=\linewidth]{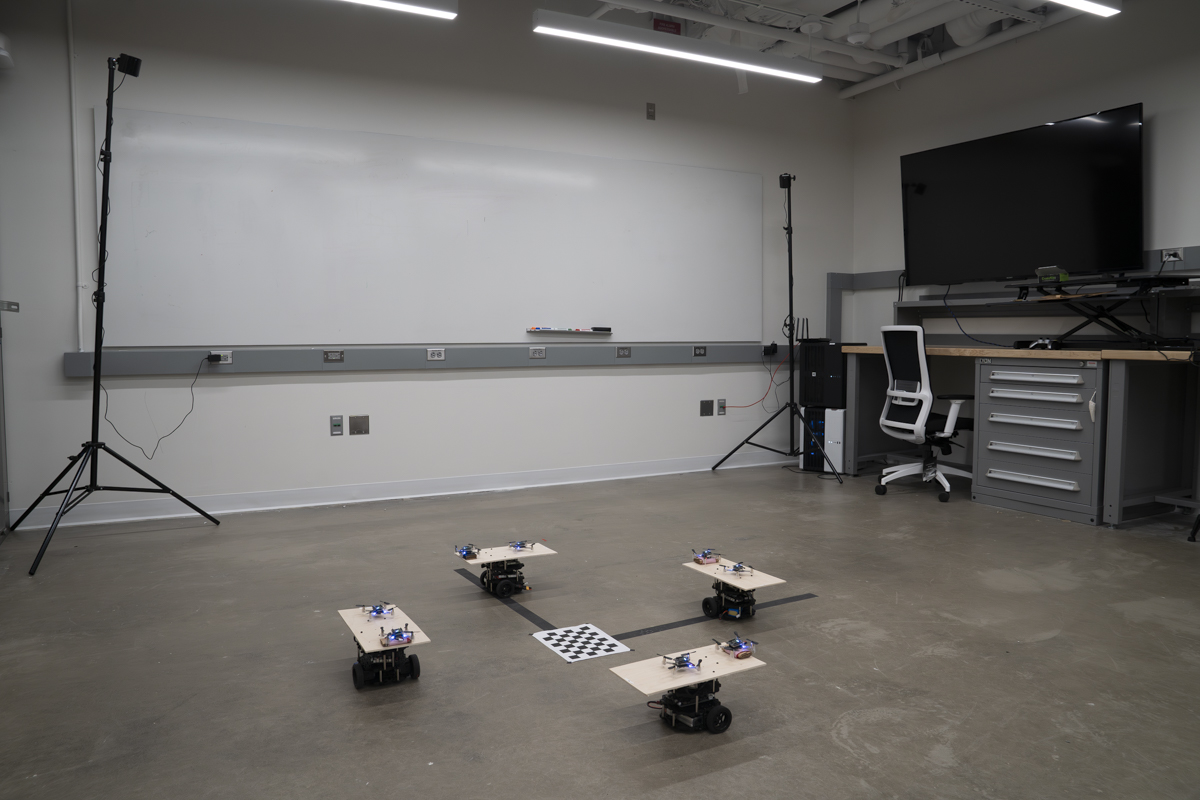}
        \caption{}
    \end{subfigure}
    \begin{subfigure}{0.24\linewidth}
        \centering
        \includegraphics[width=\linewidth]{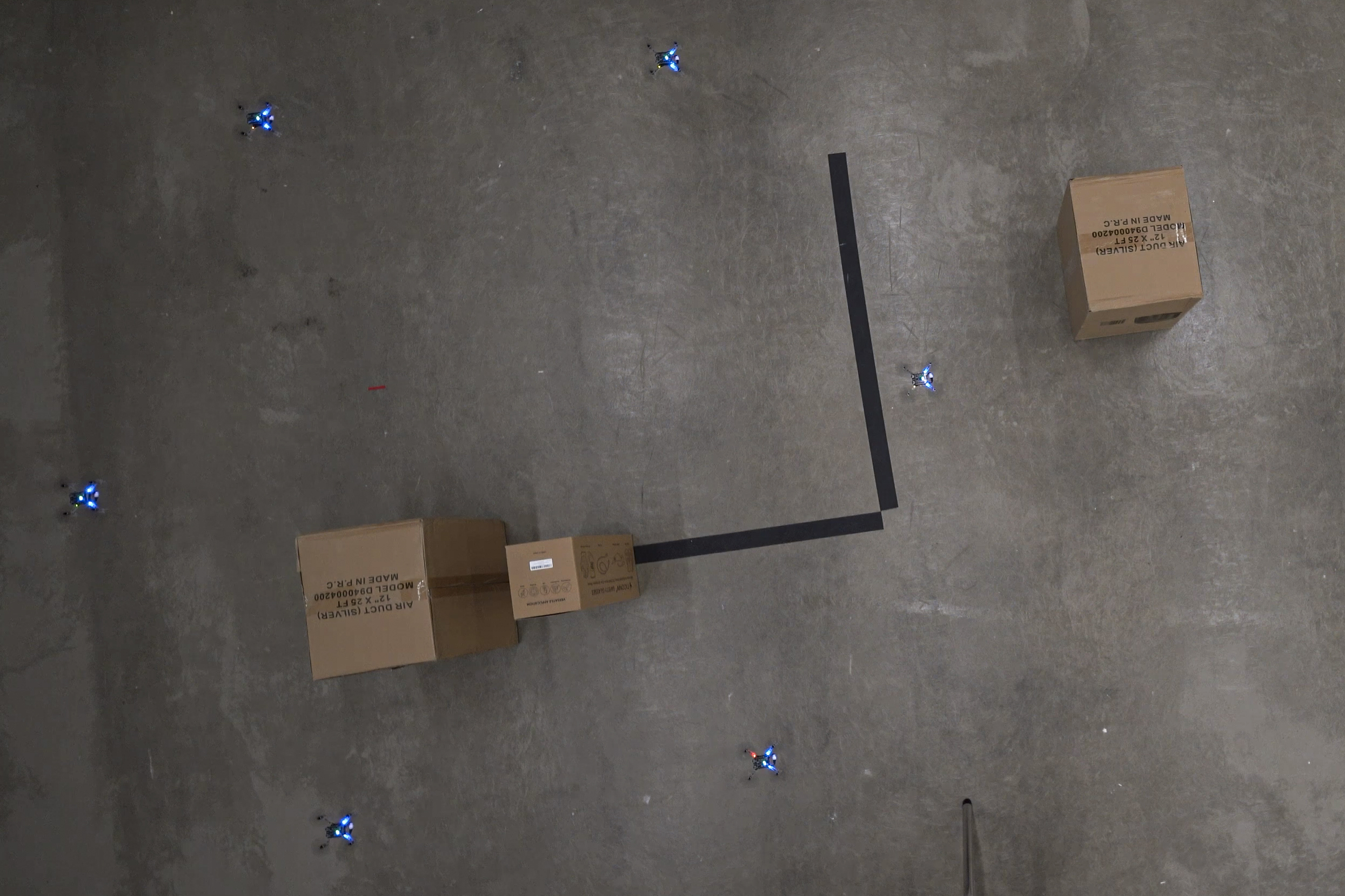}
        \caption{}
    \end{subfigure}
    \begin{subfigure}{0.24\linewidth}
        \centering
        \includegraphics[width=\linewidth]{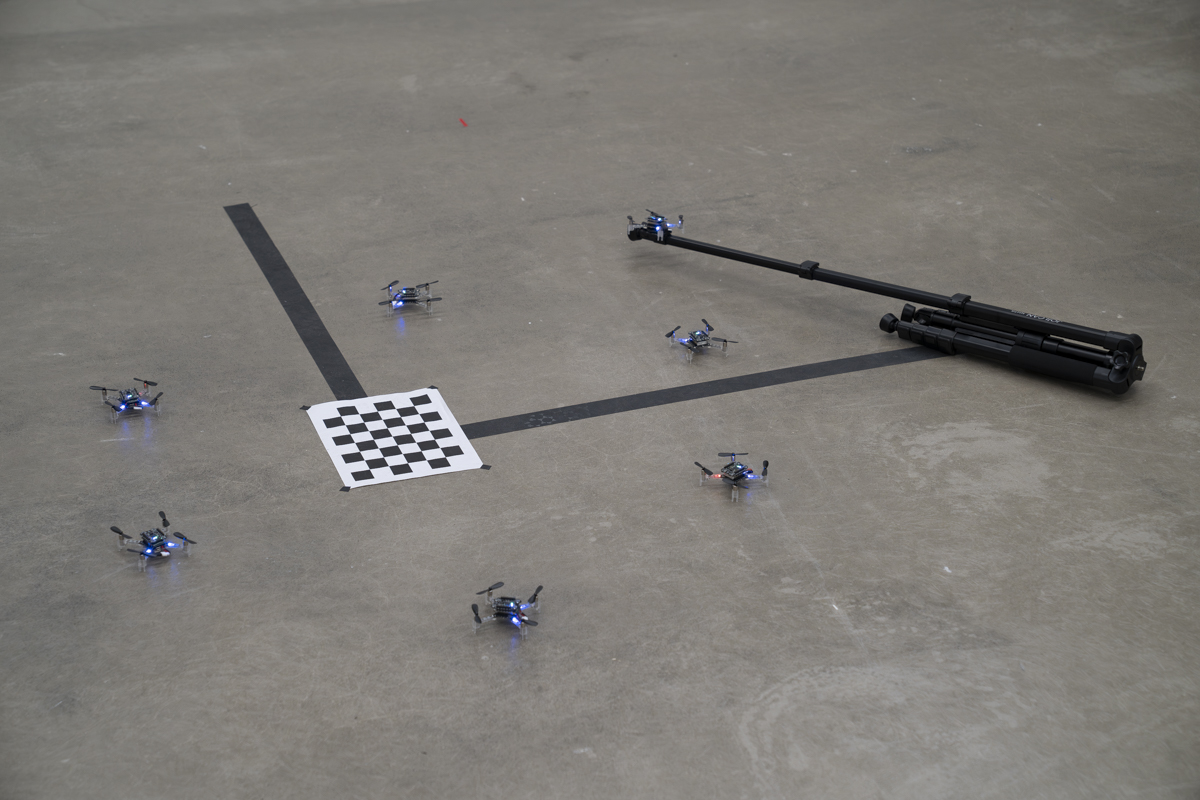}
        \caption{}
    \end{subfigure}
    \begin{subfigure}{0.24\linewidth}
        \centering
        \includegraphics[width=\linewidth]{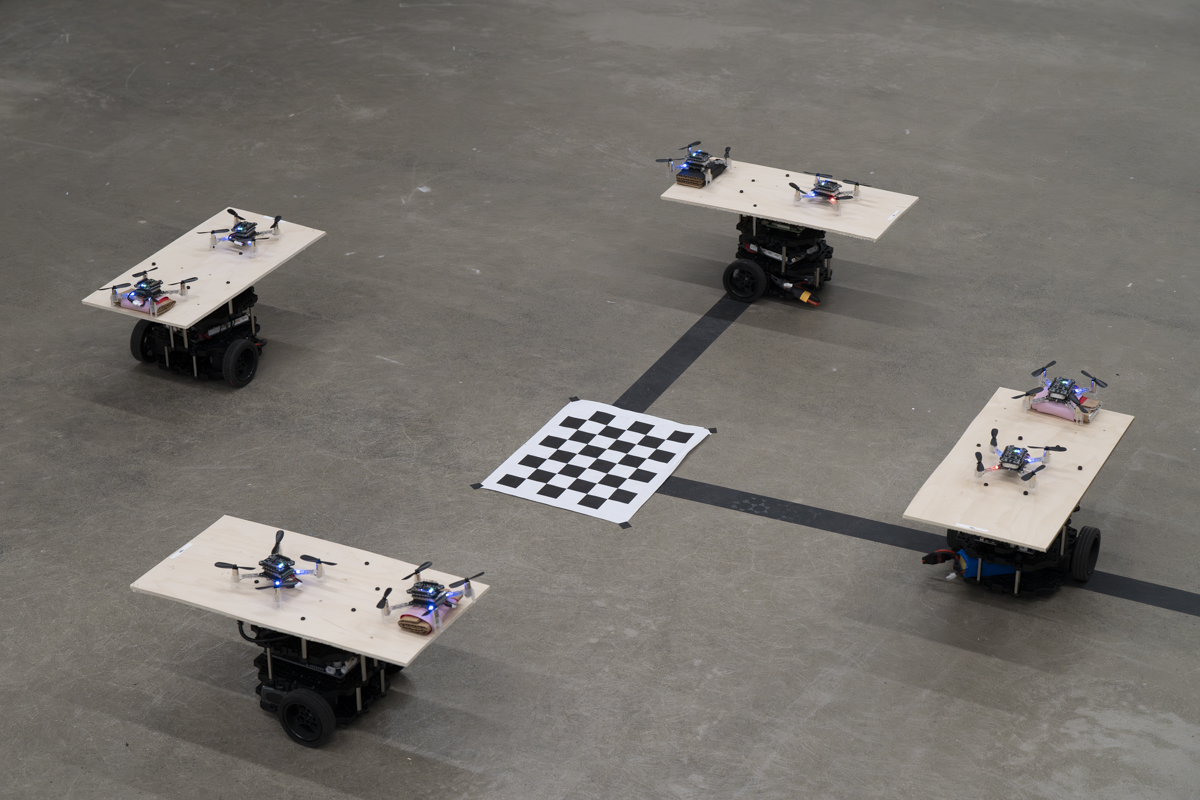}
        \caption{}
    \end{subfigure}
    \caption{\textbf{Hardware experiment setup: } (a) The overall experimental setup with ground robots, CF drones and the motion capture system.
    {(b) Setup for the position exchange with large static obstacles.}
    (c) {Setup for the position exchange with a drone acting as a moving obstacle, which is mounted on a tripod to be moved around randomly.}
    (d) Setup for tracking and landing on a moving platform. 
    }
    \label{fig:CF exp setup}
\end{figure*}

\begin{figure}
    \centering
    \includegraphics[width=1\linewidth]{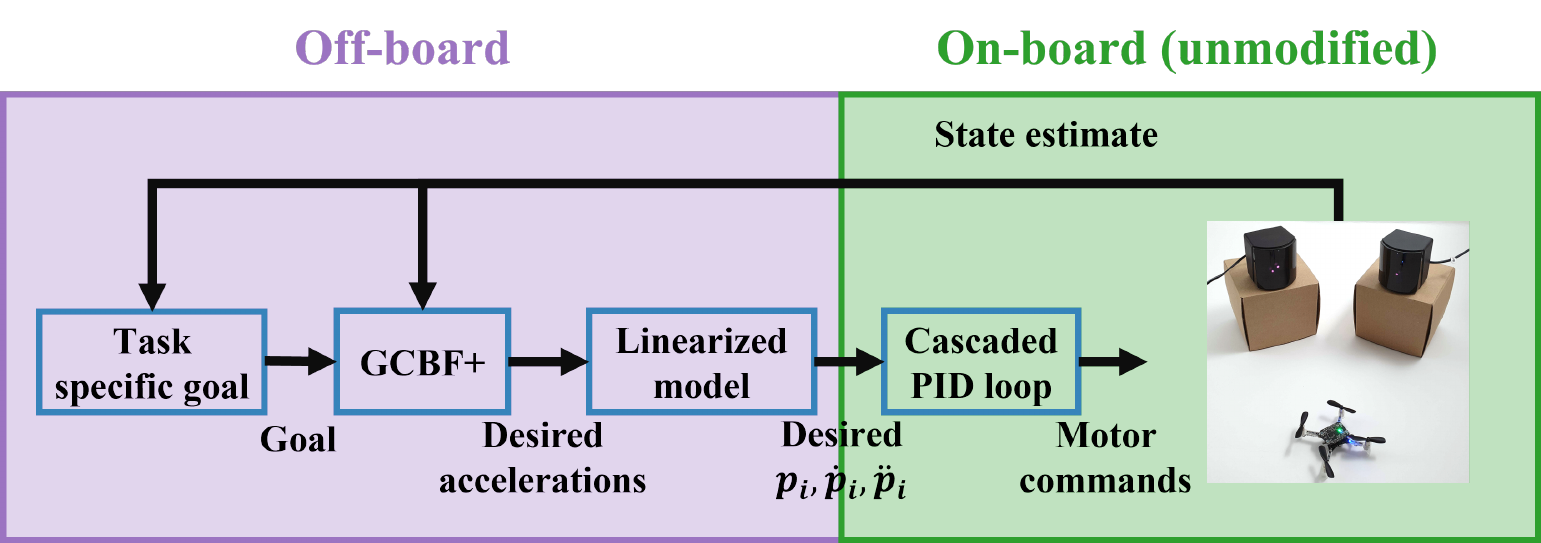}
    \caption{Hardware Control Architecture for Crazyflies}
    \label{fig:arch}
\end{figure}

The right plot in \Cref{fig:sens} analyzes the performance of GCBF+ for varying prediction horizons $T\in [4, 64]$ for labeling the data to be safe control invariant or unsafe for training. For a very small horizon $T = 4$, the safety rate drops as the resulting approximation of the safe control invariant set is poor. For a very large horizon $T = 64$, the algorithm becomes too conservative, requiring longer training times to converge. For the chosen fixed number of training steps, we observe that the resulting controller, while maintaining $100\%$ safety, achieves $98\%$ goal-reaching rate. However, as observed from the plots, GCBF+ is mildly sensitive to this parameter only at its extreme values, and almost insensitive in the nominal range $T\in [8, 32]$. 

As InforMARL has the best performance among baselines, we analyze its sensitivity as well. \Cref{fig:info sens} analyzes the sensitivity of the performance of InforMARL to the weight $\lambda_\mathrm{col}$ that dictates the penalty for collision in the RL reward function \eqref{eq:Informarl reward}. It can be observed that for a relatively small range of $\lambda_\mathrm{col}\in [1, 10]$, InforMARL achieves high performance. For smaller values of this weight, the RL-based method over-prioritizes goal-reaching, compromising on safety, and for larger values of this weight, the goal-reaching performance is poor due to over-prioritization of the system safety. 

These experiments illustrate that the proposed algorithm GCBF+ is not as sensitive to its crucial hyper-parameters as InforMARL, and hence, does not require fine-tuning of such parameters to obtain desirable results.

\begin{figure*}
    \centering
    \includegraphics[width=0.99\linewidth]{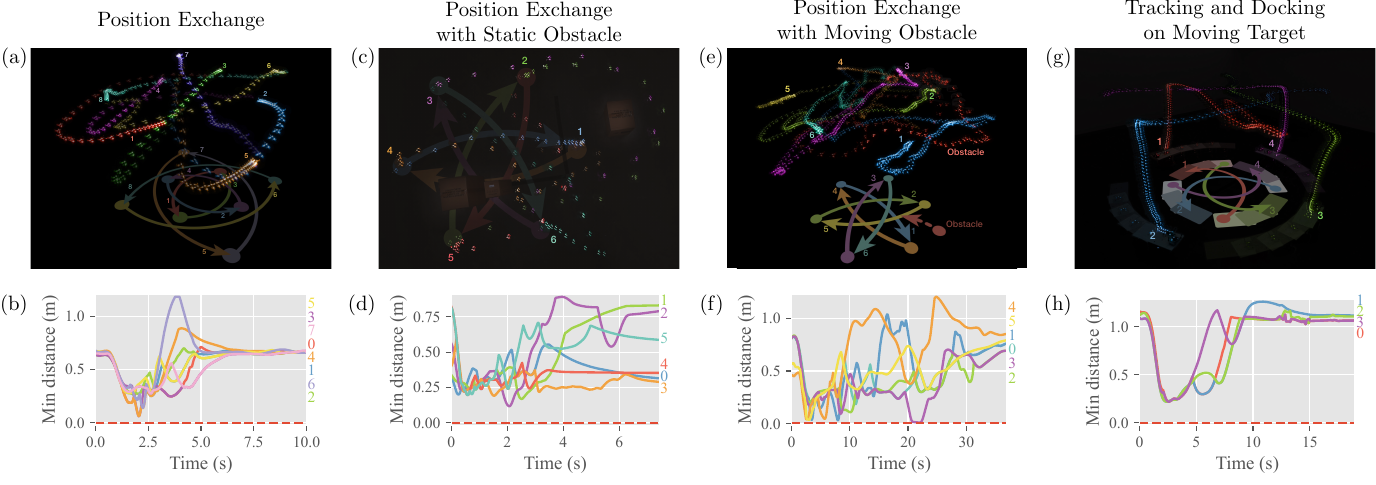}
    \caption{\textbf{Hardware Experiment Results: } (a,c,e,g) Time-lapse illustration of a Crazyflie swarm controlled via GCBF+. (b,d,f,h) The associated minimum distances for each agent to other agents and obstacles.}
    \label{fig:hw_all}
\end{figure*}

\section{Hardware experiments}\label{sec:hardware exps}
We conduct hardware experiments on a swarm of Crazyflie (CF) 2.1 platform\footnote{https://www.bitcraze.io/products/crazyflie-2-1/} to illustrate the applicability of GCBF+ on real robotic systems. We conduct four sets of experiments as discussed below. The hardware setup is illustrated in \Cref{fig:CF exp setup}. To communicate with the CFs, two Crazyradios are used. Localization is performed using the {Lighthouse localization system}\footnote{http://tinyurl.com/CFlighthouse}. Four {SteamVR Base Station 2.0}\footnote{http://tinyurl.com/lighthouseV2S} are mounted on tripods and placed on the corners of the flight area.

\subsection{Control architecture} \label{sec:control_arch}
An overview of the hardware control architecture is shown in
\Cref{fig:arch}.
Computation is split into onboard, i.e., on the CF micro-controller unit (MCU), and off-board, i.e., a laptop connected to the CFs over Crazyradio. 
To communicate with the Crazyflies, we use the crazyswarm2 ROS2 package\footnote{https://github.com/IMRCLab/crazyswarm2}, which allows for receiving full state estimates from and sending control commands to the CFs. We use a single ROS2 node for the off-board computations at 50 Hz.

\textbf{Task Specific Logic}
The state estimates are used to compute a task-specific goal position for each of the CFs. For the swapping tasks, the goals do not change. For the docking task, we take the location of the Turtlebots to be the goal position.

\textbf{GCBF+ Controller}
The goal positions and the state estimates (position and velocities) are next used to compute target accelerations for each CF using the GCBF+ controller. This GCBF+ controller is trained using double integrator dynamics. 

\textbf{Ideal Dynamics Model}
To track the computed desired accelerations from GCBF+, we make use of the {\texttt{cmd\_full\_state}}\footnote{http://tinyurl.com/CFcmdFullS} interface in \verb|crazyswarm2|. However, \verb|cmd_full_state| requires set points for the whole state (i.e., also position and velocity) instead of just accelerations. To resolve this problem, we simulate the ideal dynamics model (i.e., double integrator) used for training GCBF+ and take the resulting future positions and velocities that would result from applying the desired accelerations after some duration $\Delta t$. We used $\Delta t = \mathrm{50 ms}$ in our hardware experiments.

We do not modify the onboard computation. The received full state set points are used as set points for a cascaded PID controller, as described in the {Crazyflie documentation}\footnote{http://tinyurl.com/CFCasCadePID}.

\subsection{Experimental results}
We conduct the following four sets of hardware experiments.

\textbf{Position exchange}~ In this experiment, we arrange the CF drones in a circular configuration with the objective of each drone exchanging position with the diagonally opposite drone. We perform experiments with up to $8$ drones. This is a typical experiment setting used for illustration of the capability of an algorithm to maintain safety where there are many inter-agent interactions. The resulting trajectories of the drones are plotted in \Cref{fig:hw_all}(a-b).
As can be observed from the figure, the CF drones maintain the required safe distance and land safely at their desired location. 

{ To further validate the efficacy of GCBF+ on physical robots, we perform the position exchange experiment with $6$ drones in $32$ trials. The distribution of the minimum distance between CF drones is shown in \Cref{fig: hw_dist}. We can observe that the GCBF+ controller successfully avoids collisions in all trials. }

\begin{figure}
    \centering
    \includegraphics[width=.8\columnwidth]{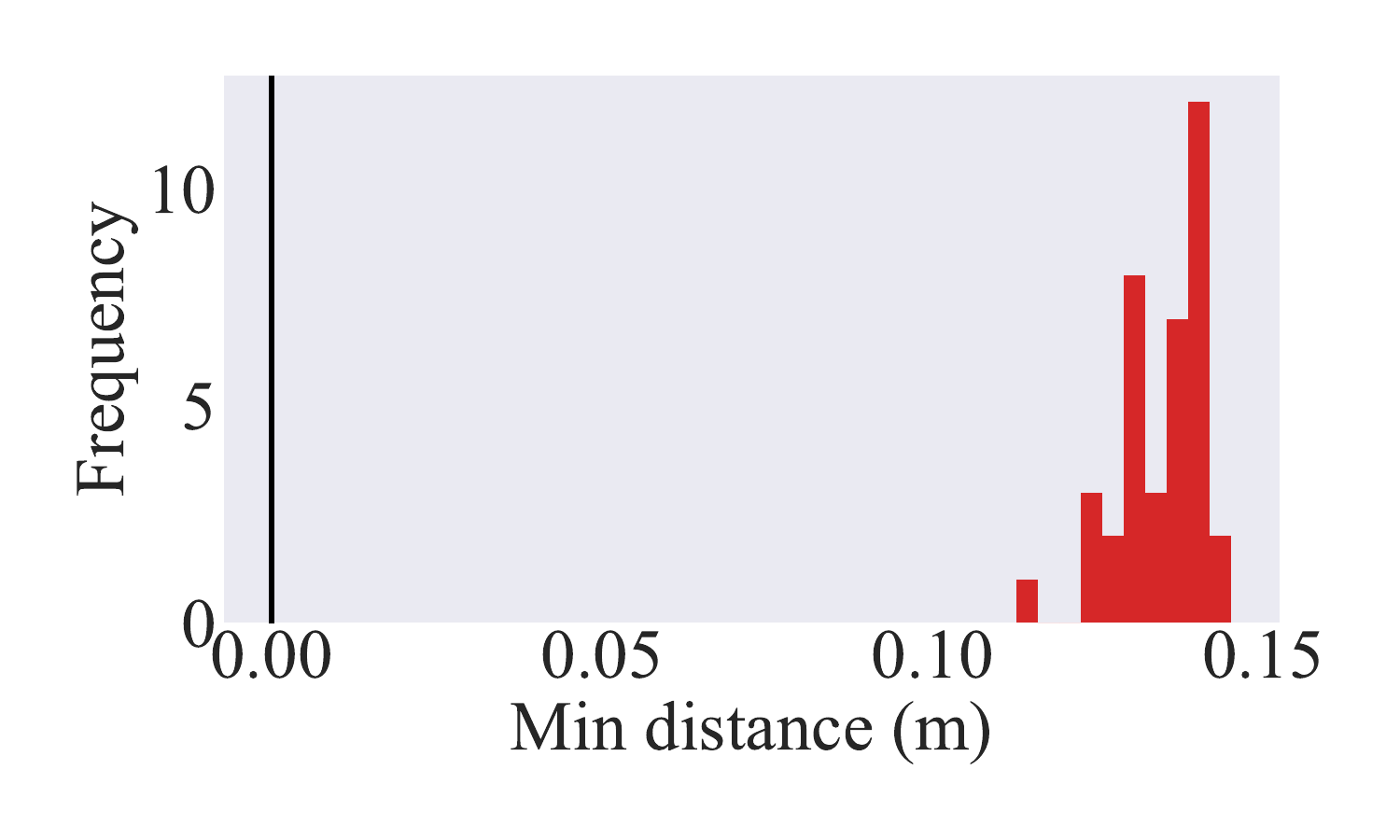}
    \caption{Distribution of the minimum distance between Crazyflies in the position exchange task.}
    \label{fig: hw_dist}
\end{figure}

{\textbf{Position exchange with static obstacles}~ In this experiment, we introduce large obstacles to the position exchange experiment as illustrated in \Cref{fig:CF exp setup}(b). Since the CF drones do not have lidar sensors, the lidar observations are simulated using the current position of the drones.
We perform experiments with $6$ drones, and the experimental results show that the success rate is $100\%$ for all $8$ experiments.}

\textbf{Position exchange with moving obstacle}~ In this experiment, we add a moving obstacle to the setup from the previous experiment { to show the generalization ability of GCBF+ to unseen scenarios}. The moving obstacle is moved arbitrarily around the environment by a human subject. The path of the moving obstacle is not known beforehand by the controlled CF drones. The moving obstacle is the same size as a Crazyflie drone and moves with the same maximum speed as the controlled drones. \Cref{fig:hw_all}(e-f) illustrate that safety is maintained at all times.

\textbf{Tracking and landing on moving target}~ In this experiment, the CF drones are required to track a moving ground vehicle and land on it. For this experiment, instead of a simple LQR controller, a back-stepping controller is used as a nominal controller that can track a moving target with time-varying acceleration. We use four Turtlebot 3 mobile robots\footnote{https://www.turtlebot.com/turtlebot3/} as the moving target, equipped with a platform on its top where a CF drone can land. Initially, one CF drone is placed on each of the Turtlebots. The drones take off and start tracking the diagonally opposite moving target, while the Turtlebots move in a circular trajectory.
From \Cref{fig:hw_all}(g-h), the drones successfully land on the moving targets while maintaining safety with each other in this dynamically changing environment.
This illustrates the generalizability of GCBF+ to a variety of control problems.

\textbf{Experiment videos}~ The experiment videos are available at \href{https://mit-realm.github.io/gcbfplus/}{https://mit-realm.github.io/gcbfplus/}.

\section{Conclusions}\label{sec:conclusions}
In this paper, we introduce a new class of CBF, termed GCBF, to encode inter-agent and obstacle collision avoidance in a large-scale MAS. We propose GCBF+, a training framework that utilizes GNNs for learning a GCBF candidate and a distributed control policy using only local observations. The proposed framework can also incorporate LiDAR point-cloud observations instead of actual obstacle locations, for real-world applications. Numerical experiments illustrate the efficacy of the proposed framework in achieving high safety rates in dense multi-agent problems and its superiority over the baselines for MAS consisting of nonlinear dynamical agents. Trained on 8 agents, GCBF+ achieves over 80\% safety rate in environments with more than 1000 agents, demonstrating its generalizability and scalability. A major advantage of the GCBF+ algorithm is that it does not have a trade-off between safety and performance, as is the case with reinforcement learning (RL)-based methods. Furthermore, hardware experiments demonstrate its applicability to real-world robotic systems.  

The proposed work has a few {limitations}. {In the current framework, there is no cooperation among the controlled agents, which leads to conservative behaviors. In certain scenarios, this can also lead to deadlocks resulting in a lower reaching rate, as observed in the numerical experiments as well. We are currently investigating methods of designing a high-level planner that can resolve such deadlocks and lead to improved performance.} Similar to other NN-based control policies, the proposed method also suffers from difficulty in providing formal guarantees of correctness. In particular, it is difficult, if not impossible, to verify that the proposed algorithm can always keep the system safe via formal verification of the learned neural networks {(see Appendix 1 in \cite{katz2017reluplex} on NP-completeness of NN-verification problem)}. This informs our future line of work on looking into methods of verification of the correctness of the control policy. Lastly, in this work, we assumed that all the agents have the same underlying dynamics for simplicity. For heterogeneous MAS where agent dynamics are different, the type of agents can be encoded in node features.
In addition, edge features can be chosen so that the shared information is the same for different types of nodes. We leave the extension of our method for heterogeneous systems as future work.

\begin{appendices}

\section{Proof of the claim in Remark \ref{remark:R_small}}\label{app: R_small}
For any $i \in V_a$, by definition of $M$ and the continuity of the position of nodes,
changes in the neighboring indices $\mathcal{N}_i$ can only occur without collision at a distance $R$. To see this, we consider the following three cases.

\noindent\textbf{Case 1: } $\abs{\tilde{\mathcal{N}}_i} < M$, i.e., the number of neighbors is less than $M$. In this case, $\mathcal N_i = \Tilde{\mathcal N}_i$ and hence, a node $j$ is added or removed to $\mathcal{N}_i = \tilde{\mathcal{N}}_i$ when it enters or leaves the sensing radius $R$.

\noindent\textbf{Case 2: } $\abs{\tilde{\mathcal{N}}_i} > M$, i.e., the number of neighbors is more than $M$. In this case, there are more than $M - 1$ neighbors of agent $i$ within sensing radius $R$, which, from the definition of $M$, implies that the MAS is unsafe.

\noindent\textbf{Case 3: } $\abs{\tilde{\mathcal{N}}_i} = M$, i.e., the number of neighbors is $M$. In this case, if a neighbor is added to $\tilde{\mathcal{N}}_i$ without any other agent leaving the set $\tilde{\mathcal N}_i$, then we obtain Case 2 and the MAS is unsafe. If a node $j$ is removed with no other neighbors added to $\tilde{\mathcal{N}}_i$, then this happens when $j$ leaves the sensing radius $R$. Finally, if a node $j$ is removed at the same time that a node $k$ is added without changing the size of $\abs{\mathcal{N}_i} = M$, by the continuity of the position dynamics there exists a time $t$ where both nodes are at the same distance $R$ from $p_i$. However, this implies that $\abs{\mathcal{N}_i} = M + 1$, and the MAS is unsafe by Case 2.

Consequently, changes in the neighboring indices $\mathcal{N}_i$ can only occur without collision at a distance $R$.

{
\section{Proof of Lemma \ref{lemma:cts}}\label{app: h_cts_proof}
\begin{proof}
For convenience, define $\tilde{h} : \mathbb{R} \to \mathbb{R}$ as
\begin{equation}
    \tilde{h}(t) \coloneqq h( \bar{x}_{\mathcal{N}_i(t)}(t) ).
\end{equation}
In order to prove continuous differentiability of $\tilde h$ at each $t$, consider two cases, namely time instants when the neighborhood $\mathcal N_i(t)$ changes due to additional or removal of a neighbor node, and time instants when there is no change in $\mathcal N_i(t)$. 
First, consider $t = t_0$ such that the neighborhood set $\mathcal{N}_i(t)$ does not change at $t_0$, i.e., $\lim_{t\uparrow t_0}\mathcal N_i(t) = \lim_{t\downarrow t_0}\mathcal N_t(t) = \mathcal N_i(t_0)$. Since $h$ and $\bar{x}_{\mathcal{N}_i}$ are continuously differentiable at $t = t_0$, we obtain that $\tilde{h}$ is continuously differentiable at $t = t_0$.

Now, consider the case when $\mathcal{N}_i$ changes at $t = t_0$. For the sake of brevity, denote the neighborhood before the change as $\mathcal{N}_i^-$, and the neighborhood after the change as $\mathcal{N}_i^+$, i.e., $\mathcal N_i^-  = \lim_{t\uparrow t_0}\mathcal N_i(t)$ and $\mathcal N_i^+ = \lim_{t\downarrow t_0}\mathcal N_i(t)$.  Using $2)$ from \Cref{def: gcbf}, $\tilde{h}$ is continuous at $t = t_0$. Moreover, using 2) from \Cref{def: gcbf} on $t < t_0$ and $t \geq t_0$ separately yields the one-sided limits
\begin{equation}
    \lim_{t \uparrow t_0} \frac{h( \bar{x}_{\mathcal{N}_i^-}(t)) - h( \bar{x}_{\mathcal{N}_i^+}(t_0))}{t - t_0} = 0,
\end{equation}
and
\begin{equation}
    \lim_{t \downarrow t_0} \frac{h( \bar{x}_{\mathcal{N}_i^+}(t)) - h( \bar{x}_{\mathcal{N}_i^+}(t_0) )}{t - t_0} = 0.
\end{equation}
This, along with $1)$ from \Cref{def: gcbf} (i.e., the gradient of $h$ is zero at $t_0$) implies that the derivative of $\tilde{h}$ exists at $t_0$ and is continuous. Hence, $\tilde h$ is continuously differentiable for all $t$. 
\end{proof}
}

\section{Proof of Theorem \ref{thm: safety result}}\label{app: safety proof}

\begin{proof}
    Consider any $i \in V_a$. We first prove that $\bar{x}_{\mathcal{N}_i}(t) \in {\mathcal{B}_h}$ for all $t \geq 0$. Define $t_k$ for $k \in \mathbb{N}$ with $t_0=0$, such that $\mathcal{N}_i$ is constant on the time segments $[t_k, t_{k+1})$ for all $i$\footnote{A Zeno behavior is not possible because of the smoothness of the control input.} i.e., 
    \begin{equation}
        t_{k} \coloneqq \inf \{ t > t_{k-1} \mid \exists i : \mathcal{N}_i(t) \not= \mathcal{N}_i(t_k) \},\quad k \geq 1.
    \end{equation}
    For each such interval $[t_k, t_{k+1})$, suppose that $h( \bar{x}_{\mathcal{N}_i}(t_k ) ) \geq 0$. Then, using \cite[Lemma 3.4]{khalil2002nonlinear}
on the function $t \mapsto h( \bar{x}_{\mathcal{N}_i}(t) )$ along with \eqref{eq:graph CBF} implies that
    \begin{equation} \label{eq:h_interval_positive}
        h( \bar{x}_{\mathcal{N}_i}(t) ) \geq 0, \quad \forall t \in [t_k, t_{k+1}).
    \end{equation}
    Since this holds for all $i \in V_a$, we obtain that
    \begin{equation*}
        \bar{x}(t_k) \in \mathcal{C}_N \subset \mathcal{S}_N \implies \bar{x}(t) \in \mathcal{C}_N \subset \mathcal{S}_N, \quad \forall t \in [t_k, t_{k+1}).
    \end{equation*}
    {
    Since $t \mapsto h(\bar{x}_{\mathcal{N}_i}(t))$ is continuous from \Cref{lemma:cts}, the limit of this map at $t_{k+1}$ exists.
    Taking this limit on the left side of \eqref{eq:h_interval_positive} implies that  $h(\bar{x}_{\mathcal{N}_i(t_{k+1})}) \geq 0$.
    }

    Since $h( \bar{x}_{\mathcal{N}_i}(0) ) \geq 0$, applying induction thus gives the result that $h( \bar{x}_{\mathcal{N}_i}(t) ) \geq 0$, and hence $\bar{x}_{\mathcal{N}_i}(t) \in \mathcal{C}_i$, for all $t \geq 0$. Since this holds for all $i \in V_a$, by definition of $\mathcal{C}_N$, we have that $\bar{x} \in \mathcal{C}_N \subset \mathcal{S}_N$ for all $t \geq 0$.
\end{proof}

\section{Experiment details and more results}

\subsection{Environment details}\label{app: env details}
Here, we provide the details of each experiment environment. 

\textbf{SingleIntegrator} ~ We use single integrator dynamics as the base environment to verify the correctness of the baseline methods and to show the performance of the methods when there are no control input limits. The dynamics is given as $\dot x_i = v_i$, where $x_i=[p^x_i, p^y_i]^\top\in \mathbb R^2$ is the position of the $i$-th agent and $v_i =[v^x_i, v^y_i]^\top$ its velocity.  In this environment, we use $e_{ij}=x_j-x_i$ as the edge information.
We use the following reward function weights for training InforMARL.
\begin{equation}
    \lambda_{\text{nom}} = 0.1, \quad \lambda_{\text{goal}} = 0.1, \quad
    \lambda_{\text{col}} = 5.0\,.
\end{equation}

\textbf{DoubleIntegrator}~ We use double integrator dynamics for this environment. The state of agent $i$ is given by $x_i=[p^x_i,p^y_i,v^x_i,v^y_i]^\top$, where $[p^x_i,p^y_i]^\top$ is the position of the agent, and $[v^x_i,v^y_i]^\top$ is the velocity. The action of agent $i$ is given by $u_i=[a^x_i,a^y_i]^\top$, i.e., the acceleration. The dynamics function is given by:
\begin{equation}
    \dot x_i = \left[v^x_i, v^y_i, a^x_i, a^y_i\right]^\top
\end{equation}
The simulation time step is $\delta t=0.03$. In this environment, we use $e_{ij}=x_j-x_i$ as the edge information.
We use the following reward function weights for training InforMARL without obstacles.
\begin{equation}
    \lambda_{\text{nom}} = 0.1, \quad \lambda_{\text{goal}} = 0.1, \quad
    \lambda_{\text{col}} = 2.0\,.
\end{equation}
and the following reward function weights with obstacles.
\begin{equation}
    \lambda_{\text{nom}} = 0.1, \quad \lambda_{\text{goal}} = 0.1, \quad
    \lambda_{\text{col}} = 5.0\,.
\end{equation}
For the hand-crafted CBFs, we use $\alpha_0=10$. 

\begin{figure*}[t]
    \centering
    \includegraphics[width=\textwidth]{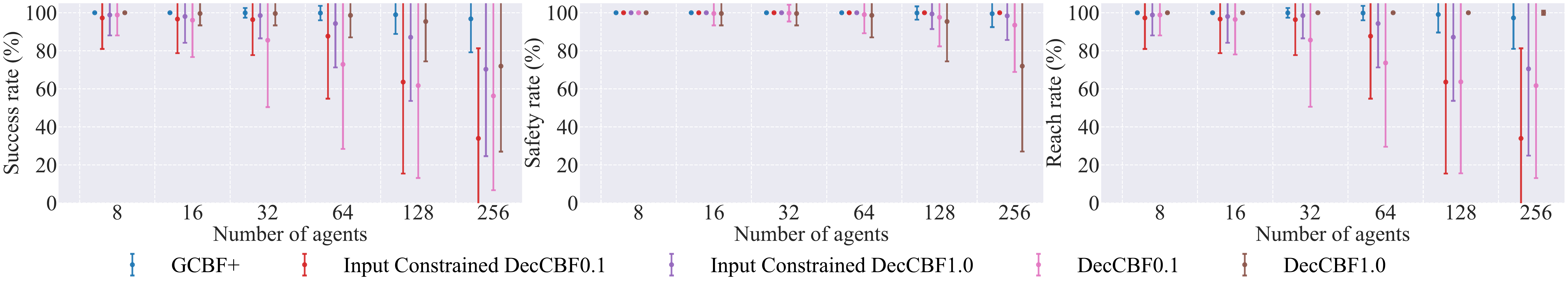}
    \caption{\textbf{Comparison with a CBF that accounts for input constraints}: 
    Success (left), safety (middle), and reach (right) rates for the proposed method (GCBF+), the decentralized CBF, and the decentralized input-constrained CBF in the DoubleIntegrator environment.}
    \label{fig: cbf-u-constrained}
\end{figure*}

\textbf{DubinsCar}~ We use the standard Dubin's car model in this environment. The state of agent $i$ is given by $x_i=[p^x_i,p^y_i,\theta_i,v_i]^\top$, where $[p^x_i,p^y_i]^\top$ is the position of the agent, $\theta_i$ is the heading, and $v_i$ is the speed. The action of agent $i$ is given by $u_i=[\omega_i,a_i]^\top$ containing angular velocity and longitudinal acceleration. The dynamics function is given by:
\begin{equation}
    \dot x_i = \left[v_i \cos(\theta_i), v_i \sin(\theta_i), \omega_i, a_i\right]^\top
\end{equation}
The simulation time step is $\delta t=0.03$. We use $e_{ij}=e_j(x_j)-e_i(x_i)$ as the edge information, where $e_i(x_i)=[p^x_i,p^y_i,v_i \cos(\theta_i), v_i \sin(\theta_i)]^\top$. 
We use the following reward function weights for training InforMARL.
\begin{equation}
    \lambda_{\text{nom}} = 0.1, \quad \lambda_{\text{goal}} = 0.1, \quad
    \lambda_{\text{col}} = 1.0\,.
\end{equation}
For the hand-crafted CBFs, we use $\alpha_0=5$. 

\textbf{LinearDrone}~ We use a linearized model for drones in our experiments. The state of agent $i$ is given by $x_i=[p^x_i,p^y_i,p^z_i,v^x_i,v^y_i,v^z_i]^\top$ where $[p^x_i,p^y_i,p^z_i]^\top$ is the 3D position, and $[v^x_i,v^y_i,v^z_i]^\top$ is the 3D velocity. The control inputs are $u_i=[a^x_i,a^y_i,a^z_i]^\top$, and the dynamics function is given by:
\begin{equation}
    \dot x_i = \left[\begin{array}{cc}
         v^x_i \\
         v^y_i \\
         v^z_i \\
         -1.1v^x_i + 1.1a^x_i \\
         -1.1v^y_i + 1.1a^y_i \\
         -6v^z_i + 6a^z_i
    \end{array}\right]
\end{equation}
The simulation time step is $\delta t=0.03$. We use $e_{ij}=x_j - x_i$ as the edge information.
We use the following reward function weights for training InforMARL.
\begin{equation}
    \lambda_{\text{nom}} = 1.0, \quad \lambda_{\text{goal}} = 0.1, \quad
    \lambda_{\text{col}} = 1.0\,.
\end{equation}
For the hand-crafted CBFs, we use $\alpha_0=3$. 

\textbf{CrazyflieDrone}~ 
\revision{One of the advantages of GCBF is that it is model-agnostic. To show that GCBF also works for other more realistic dynamics, we test GCBF for CrazyFlie dynamics}. The 6-DOF quadrotor dynamics are given in \cite{budaciu2019evaluation} with $x\in \mathbb R^{12}$ consisting of positions, velocities, angular positions and angular velocities, and $u\in \mathbb R^4$ consisting of the thrust at each of four motors:
\begin{subequations}
\begin{align}
    \dot p_x = & ~ \big(c(\phi)c(\psi)s(\theta)+s(\phi)s(\psi)\big)w\\
    & -\big(s(\psi)c(\phi)-c(\psi)s(\phi)s(\theta)\big)v + u c(\psi)c(\theta) \\
    \dot p_y = & ~ \big(s(\phi)s(\psi)s(\theta)+c(\phi)c(\psi)\big)v\\
    & -\big(c(\psi)s(\phi) -s(\psi)c(\phi)s(\theta)\big)w + u s(\psi)c(\theta)\\
    \dot p_z = & ~ w~c(\phi)c(\theta)-u~s(\theta)+v~s(\phi)c(\theta)\\
    \dot u = & ~ r~v-q~w+g~s(\theta)\\
    \dot v = & ~ p~w - r~u -g~s(\phi)c(\theta)\\
    \dot w = & ~ q~u-p~v + \frac{U_1}{m}-g~c(\theta)c(\phi)\\
    \dot \phi = & ~ r\frac{c(\phi)}{c(\theta)} + q\frac{s(\phi)}{c(\theta)} \\
    \dot \theta = & ~ q~c(\phi)-r~s(\phi)\\
    \dot \psi = & ~ p +r~c(\phi)t(\theta) + q~s(\phi)t(\theta) \\
    \dot r = & ~ \frac{1}{I_{zz}}\big(U_2-pq(I_{yy}-I_{xx})\big)\\
    \dot q = & ~ \frac{1}{I_{yy}}\big(U_3-pr(I_{xx}-I_{zz}) \big)\\
    \dot p  = & ~ \frac{1}{I_{xx}}\Big(U_4-qr(I_{zz}-I_{yy}) \Big)
\end{align}
\end{subequations}
where $m, I_{xx}, I_{yy}, I_{zz}, k_r, k_t>0$ are system parameters, $g = 9.8$ is the gravitational acceleration, $c(\cdot), s(\cdot), t(\cdot)$ denote $\cos(\cdot), \sin(\cdot), \tan(\cdot)$, respectively,  $(p_x, p_y, p_z)$ denote the position of the quadrotor, $(\phi, \theta, \psi)$ its Euler angles and $u = (U_1, U_2, U_3, U_4)$ the input vector consisting of thrust $U_1$ and moments $U_2, U_3, U_4$.\footnote{We noticed that the quadrotor dynamics in \cite{budaciu2019evaluation} (as well as the references they cite) has a couple of typos (in $\dot p_z$ and $\dot p$). We have fixed those typos using first principles.}

The relation between the vector $u$ and the individual motor speeds is given as {\small 
\begin{align}
    \begin{bmatrix}U_1\\ U_2 \\ U_3\\ U_4 \end{bmatrix}\!=\! \begin{bmatrix}C_T & C_T& C_T& C_T\\
    -dC_T\sqrt{2} & -dC_T\sqrt{2} & dC_T\sqrt{2} & dC_T\sqrt{2}\\
    -dC_T\sqrt{2} & dC_T\sqrt{2} & dC_T\sqrt{2} & -dC_T\sqrt{2}\\
    -C_D & C_D  & -C_D & C_D
    \end{bmatrix} \!\begin{bmatrix}\omega_1^2\\ \omega_2^2\\\omega_3^2\\\omega_4^2\end{bmatrix}\!,
\end{align}}\normalsize
where $\omega_i$ is the angular speed of the $i-$th motor for $i\in \{1, 2, 3, 4\}$, $C_D$ is the drag coefficient and $C_T$ is the thrust coefficient. These parameters are given as: $I_{xx} = I_{yy} = 1.395\times 10^{-5}$ kg-$\textnormal{m}^2$, $I_{zz} = 2,173\times 10^{-5}$ kg-$\textnormal{m}^2$, $m = 0.0299$ kg, $C_T = 3.1582\times 10^{-10}$ N/rpm$^2$, $C_D = 7.9379\times 10^{-12}$ N/rpm$^2$ and $d = 0.03973$ m (see \cite{budaciu2019evaluation}). 
We use the following reward function weights for training InforMARL.
\begin{equation}
    \lambda_{\text{nom}} = 0.5, \quad \lambda_{\text{goal}} = 0.1, \quad
    \lambda_{\text{col}} = 2.0\,.
\end{equation}
For the hand-crafted CBFs, we use $\alpha_0=3$. 

For the CrazyflieDrone environment, we use a two-level control architecture similar to the one used in the hardware experiments described in Section \ref{sec:hardware exps}. The chosen control algorithm computes a high-level reference velocity and yaw rate as the input, which a low-level LQR controller then tracks. 

\subsection{Comparison with CBF that accounts for input constraints}\label{app: cbf-input-constraint}
We compared GCBF+ with generic hand-crafted CBFs that do not account for input constraints in \Cref{sec:experiments}, as there is no systematic way of hand-crafting a CBF for multi-agent systems with input constraints. However, \cite{wang2017safety} proposes a CBF-based \textit{hierarchical} control framework that satisfies input constraints for double integrator systems. The pair-wise CBF is given by
\begin{equation}\label{eq: inp cont hc cbf}
    h_{ij} = \sqrt{4u_\mathrm{max}(\|p_i-p_j\|-2r)} + \frac{(p_i - p_j)^\top}{\|p_i - p_j\|}(v_i - v_j),
\end{equation}
where $u_\mathrm{max}$ is the input constraint. However, this CBF candidate might not be a valid CBF because it does not necessarily satisfy the CBF condition \eqref{eq: cbf-descent-cond} and can result in an infeasible CBF-QP. To address this problem, the authors of \cite{wang2017safety} develop a hierarchical control framework where whenever the CBF-QP is infeasible for agent $i$, it decelerates with the maximum acceleration $u_\mathrm{max}$. We generate two baselines following this framework with $\alpha=1.0$ and $\alpha=0.1$ in the CBF condition \eqref{eq: hc CBF condition} and compare GCBF+ with these two baselines (Input Constrained DecCBF $\alpha$), as well as with the decentralized framework introduced in \Cref{sec: baselines} using the CBF in \eqref{eq: inp cont hc cbf}. The results are shown in \Cref{fig: cbf-u-constrained}.
We can observe that the safety rate of Input Constrained DecCBF is close to $100\%$, much higher than DecCBF due to explicitly considering input constraints.
In light of the safety guarantees in Theorem VI.2 of \cite{wang2017safety}, we suspect that the safety violations of Input Constrained DecCBF occur due to time discretization errors which are not accounted for in the proof of safety in \cite{wang2017safety}.
However, since the agents need to decelerate with maximum acceleration whenever the CBF-QP is infeasible, the reach rates of the Input Constrained DecCBF are low, resulting in a low success rate.
GCBF+, however, considers input constraints during the learning process, and the learned GCBF does not need another hierarchical controller to maintain safety. Moreover, this hybrid controller of \cite{wang2017safety} only works with double integrator dynamics and cannot be directly used with other dynamics.

\begin{figure}[t]
    \centering
    \includegraphics[width=.8\columnwidth]{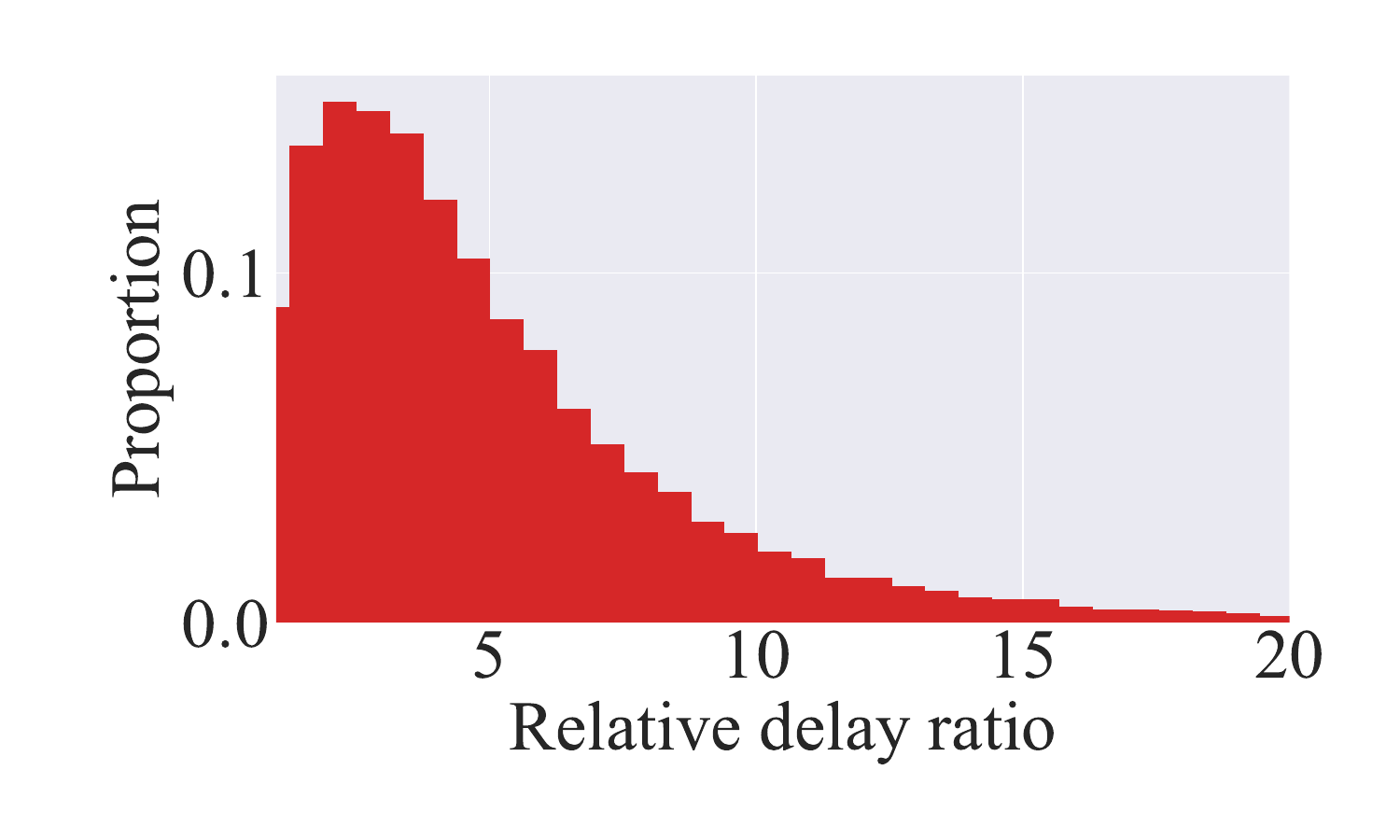}
    \caption{Histogram of the relative delay ratio, defined as the timestep taken by GCBF+ to reach the goal divided by the timestep taken by the nominal controller to reach the goal in the DoubleIntegrator environment.}
    \label{fig: reach-time}
\end{figure}

\subsection{Analysis of time delay}
Apart from the goal-reaching rate, we report the distribution of the relative delay ratio, defined as $t_{\textnormal{GCBF+}} / t_{\textnormal{nom}}$ where $t_{\textnormal{GCBF+}}$ and $t_{\textnormal{nom}}$ are the timesteps taken by the agents to reach their goals using GCBF+ and the nominal controller, respectively, in the DoubleIntegrator environment with $1024$ agents in \Cref{fig: reach-time}. We can observe that about $90\%$ of the agents reach their goals within $10\times$ the timesteps taken by the nominal controller. 

\end{appendices}

\section*{Acknowledgments}
This work was partly supported by the National Science Foundation (NSF) CAREER Award \#CCF-2238030, the MIT Lincoln Lab under the Safety in Aerobatic Flight Regimes (SAFR) program, and the MIT-DSTA program. Any opinions, findings, conclusions, or recommendations expressed in this publication are those of the authors and don’t necessarily reflect the views of the sponsors.

\bibliographystyle{IEEEtran}
\bibliography{refs}

\begin{biography}[{\includegraphics[width=1in,height=1.25in,clip,keepaspectratio]{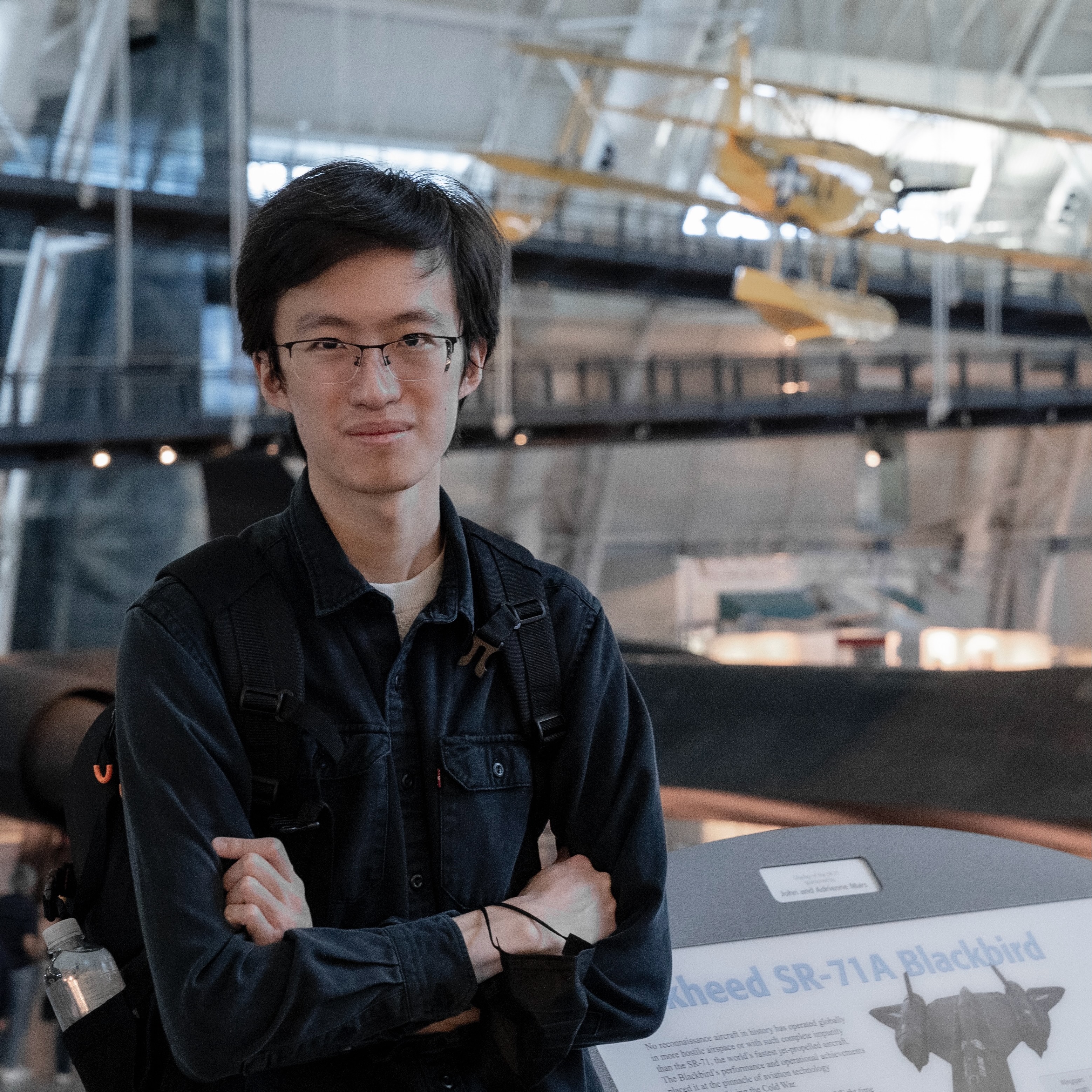}}]{Songyuan Zhang} (Student Member, IEEE) received the Bachelor of Engineering degree in engineering mechanics [Tsien Excellence in Engineering Program (TEEP)] from Tsinghua University, Beijing, China, in 2021 and the Master of Science degree in aeronautics and astronautics from the Massachusetts Institute of Technology (MIT), Cambridge, MA, USA, in 2024.

He is currently a graduate student with the Department of Aeronautics and Astronautics, MIT. His
research interests include learning safe and performant controllers for complex large-scale autonomous systems using certificates and reinforcement learning.
\end{biography}

\begin{biography}[{\includegraphics[width=1in,height=1.25in,clip,keepaspectratio]{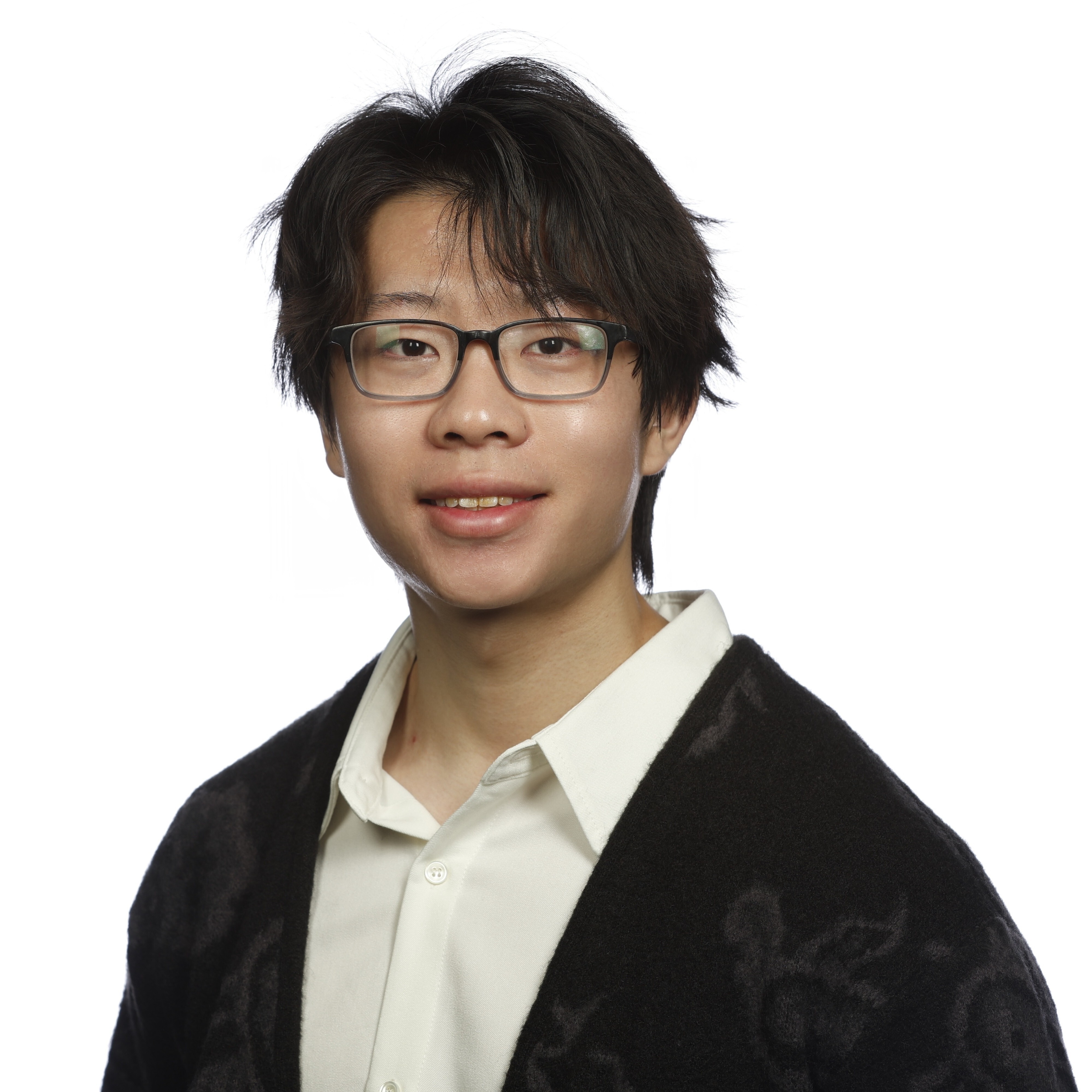}}]{Oswin So} (Graduate Student Member, IEEE) received the Bachelor of Science degree in computer science from the Georgia Institute of Technology, Atlanta, GA, Georgia, in 2021.

He is currently a graduate student with the Department of Aeronautics and Astronautics, Massachusetts Institute of Technology, Cambridge, MA, USA. His research interests include constrained optimization, safety for dynamical systems, multiagent planning, and stochastic optimal control.
\end{biography}

\begin{biography}[{\includegraphics[width=1in,height=1.25in,clip,keepaspectratio]{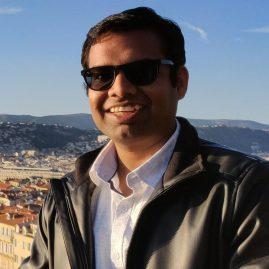}}]{Kunal Garg} (Member, IEEE) received the Master of Engineering and Ph.D. degrees in aerospace engineering from the University of Michigan, Ann Arbor, MI, USA, in 2019 and 2021, respectively, and the Bachelor of Technology degree in aerospace engineering from the Indian Institute of Technology, Mumbai, India, in 2016.

He is currently an Assistant Professor with the Mechanical and Aerospace Engineering Program, School for Engineering of Matter, Transport, and Energy, Arizona State University, Tempe, AZ, USA. Previously, he was a Postdoctoral Associate with the Department of Aeronautics and Astronautics, Massachusetts Institute of Technology, Cambridge, MA, USA, and before that, with the Department of Electrical Engineering and Computer Science, University of California, Santa Cruz, Santa Cruz, CA, USA. His research interests include robust multiagent path planning, switched and hybrid system-based analysis, and control synthesis for multiagent coordination, finite- and fixed-time stability of dynamical systems with applications to control synthesis for spatiotemporal specifications, and continuous-time optimization.
\end{biography}

\begin{biography}[{\includegraphics[width=1in,height=1.25in,clip,keepaspectratio]{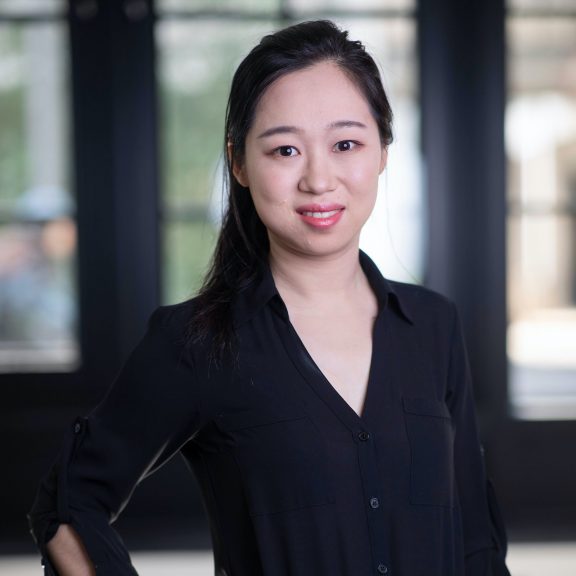}}]{Chuchu Fan} (Member, IEEE) received the Ph.D. degree in computer engineering from the Department of Electrical and Computer Engineering, University of Illinois at Urbana-Champaign, Champaign, IL, USA, in 2019.

She is currently an Associate Professor with the Department of Aeronautics and Astronautics (AeroAstro) and Laboratory for Information and Decision Systems (LIDS), Massachusetts Institute of Technology (MIT). Before that, she was a Postdoctoral Researcher with the California Institute of Technology, Pasadena, CA, USA. Her research group, Realm with MIT, works on
using rigorous mathematics, including formal methods, machine learning, and control theory, for the design, analysis, and verification of safe autonomous
systems.

Dr. Fan was the recipient of an NSF CAREER Award, an AFOSR Young Investigator Program Award, an ONR Young Investigator Program Award, and the 2020 ACM Doctoral Dissertation Award.
\end{biography}

\end{document}